\newcommand{\oea}{\mbox{${(1 + 1)}$~EA}\xspace}
\newcommand{\rUMDA}{$r$\=/UMDA\xspace} % Using an unbreakable dash: https://tex.stackexchange.com/questions/103608/how-to-force-latex-not-to-break-the-line-after-a-hyphen
\newcommand{\onemax}{\textsc{OneMax}\xspace}
\newcommand{\LO}{\textsc{Leading\-Ones}\xspace}
\newcommand{\DLB}{\textsc{Deceptive\-LeadingBlocks}\xspace}
\newcommand{\leadingones}{\LO}
\newcommand{\rLO}{$r$\=/\LO}
\newcommand*{\Tsel}{T^{\textrm{sel}}}
\newcommand{\R}{\ensuremath{\mathbb{R}}}
\newcommand{\N}{\ensuremath{\mathbb{N}}} % ohne Null!!!
\newcommand{\bbone}{{\mathds{1}}}
\newcommand{\calF}{\ensuremath{\mathcal{F}}}
\newcommand{\Var}{\mathrm{Var}\xspace} %use with [...]
\newcommand{\eps}{\varepsilon}
\let\originalleft\left
\let\originalright\right
\renewcommand{\left}{\mathopen{}\mathclose\bgroup\originalleft}
\renewcommand{\right}{\aftergroup\egroup\originalright}
\DeclareMathOperator{\E}{\mathds{E}}
\newtheorem{theorem}{Theorem}
\newtheorem{lemma}[theorem]{Lemma}
\title{Estimation-of-Distribution Algorithms for Multi-Valued Decision Variables}
\author{Firas Ben Jedidia$^{1}$ \and Benjamin Doerr$^{2}$ \and Martin S. Krejca$^{2}$}
\date
{
    \small $^{1}$École Polytechnique, Institut Polytechnique de Paris, Palaiseau, France\\
    \small $^{2}$Laboratoire d'Informatique (LIX), CNRS, École Polytechnique,\\
    \small Institut Polytechnique de Paris, Palaiseau, France
}
\begin{document}
{\sloppy
\maketitle

\begin{abstract}
    The majority of research on estimation-of-distribution algorithms (EDAs) concentrates on pseudo-Boolean optimization and permutation problems, leaving the domain of EDAs for problems in which the decision variables can take more than two values, but which are not permutation problems, mostly unexplored.
    To render this domain more accessible, we propose a natural way to extend the known univariate EDAs to this setting.
    Different from a na\"ive reduction to the binary case, our approach avoids additional constraints.

    Since understanding genetic drift is crucial for an optimal parameter choice, we extend the known quantitative analysis of genetic drift to EDAs for multi-valued variables. Roughly speaking, when the variables take $r$ different values, the time for genetic drift to become significant is $r$ times shorter than in the binary case. Consequently, the update strength of the probabilistic model has to be chosen $r$ times lower now.

    To investigate how desired model updates take place in this framework, we undertake a mathematical runtime analysis on the $r$-valued \leadingones problem. We prove that with the right parameters, the multi-valued UMDA solves this problem efficiently in $O(r\ln(r)^2 n^2 \ln(n))$ function evaluations. This bound is nearly tight as our lower bound $\Omega(r\ln(r) n^2 \ln(n))$ shows.

    Overall, our work shows that our good understanding of binary EDAs naturally extends to the multi-valued setting, and it gives advice on how to set the main parameters of multi-values EDAs.
\end{abstract}

\emph{Keywords:} Estimation-of-distribution algorithms, univariate marginal distribution algorithm, evolutionary algorithms, genetic drift, LeadingOnes benchmark.

\section{Introduction}

Estimation-of-distribution algorithms (EDAs~\cite{PelikanHL15}) are randomized search heuristics that evolve a probabilistic model of the search space (that is, a probability distribution over the search space). In contrast to solution-based algorithms such as classic evolutionary algorithms, which only have the choice between the two extreme decisions of keeping or discarding a solution, EDAs can take into account the information gained from a function evaluation also to a smaller degree. This less short-sighted way of reacting to new insights leads to several proven advantages, e.g., that EDAs can be very robust to noise~\cite{FriedrichKKS17,LehreN19gecco}. Since the evolved distributions often have a larger variance, EDAs can also be faster in exploring the search space, in particular, when it comes to leaving local optima, where they have been shown to significantly outperform simple evolutionary algorithms~\cite{HasenohrlS18,Doerr21cgajump,WangZD21,BenbakiBD21,DoerrK21ecj,Witt23}.

While EDAs have been employed in a variety of settings and to different types of decision variables~\cite{PelikanHL15,LarranagaL02}, the number of results in which they have been used for discrete optimization problems with decision variables taking more than two values, other than permutation problems, is scarce~\cite{SantanaLL08ProteinFolding,SantanaLL10FactorizationEDAs,SantanaORS02IntegerEDAs,SantanaM13MarkovNetworkEDAs,Muehlenbein97}.
All of these results have in common that they propose specific EDAs to deal with multi-valued problems.
To the best of our knowledge, no systematic way to model EDAs for the multi-valued domain exists, even not for the easiest case of EDAs that do not model dependencies, so-called \emph{univariate} EDAs (we note that multi-variate EDAs are much less understood, e.g., despite some theoretical works in this direction~\cite{LehreN19foga,DoerrK23tcs}, there are no proven runtime guarantees for these algorithms).

Since this might be a lost opportunity, we undertake the first steps towards a framework of univariate EDAs for problems with decision variables taking more than two values (but different from permutation problems).
We first note that the strong dependencies that distinguish a permutation problem from just a problem defined on $\{1, \dots, n\}^n$ have led to very particular EDAs for permutation problems. We did not see how to gain insights from these results for general multi-valued problems.

We therefore define EDAs for multi-valued decision variables from scratch, that is, without building on any related existing work. We note that, in principle, one could transform a multi-valued problem into a binary one by having, for each variable taking $r$ different values, $r$ binary variables, each indicating that the variable has the corresponding value. This would lead to a constrained optimization problem with the additional constraints that exactly one of these variables can take the value~$1$. This might be a feasible approach, but since such constraints generally impose additional difficulties, we propose a way that does not need an additional treatment of constraints (in other words, we set up our EDAs in a way that these constraints are satisfied automatically).

We defer the details to \Cref{sec:framework} and only sketch the rough idea of our approach here. For each variable taking $r$ values, without loss of generality the values $\{0, \dots, r-1\}$, we have $r$ sampling frequencies $p_0, p_1, \dots, p_{r-1}$ that always add up to~$1$. When sampling a value for the variable, we do this mutually exclusively, that is, the variable takes the value $i$ with probability~$p_i$. This mutual exclusion in the sampling immediately gives that the frequency update does not violate the property that the frequencies add up to~$1$. Consequently, this appears to be a convenient (and in fact very natural) set-up for a multi-valued~EDA. We note that there are some non-trivial technical questions to be discussed when working with frequencies borders, such as $ \left[ \frac 1n,1-\frac 1n\right]$ in the classical binary case, but we also come up with a simple and natural solution for this aspect.

As a first step towards understanding this multi-valued EDA framework, we study how prone it is to genetic drift. Genetic drift in EDAs means that sampling frequencies not only move because of a clear signal induced by the objective function, but also due random fluctuations in the sampling process. This has the negative effect that even in the complete absence of a fitness signal, the EDA develops a preference for a particular value of this decision variable. From a long sequence of works, see \Cref{sec:geneticDrift} for the details, it is well understood how the time for this genetic-drift effect to become relevant depends on the parameters of the EDA~\cite{DoerrZ20tec}. Consequently, if one plans to run the EDA for a certain number of iterations, then this quantification tells the user how to set the parameters as to avoid genetic drift within this time period.

Since such a quantification is apparently helpful in the application of EDAs, we first extend this quantification to multi-valued EDAs. When looking at the relatively general tools used in~\cite{DoerrZ20tec}, this appears straightforward, but it turns out that such a direct approach does not give the best possible result. The reason is that for multi-valued decision variables, the martingale describing a frequency of a neutral variable over time has a lower variance (in the relevant initial time interval). To profit from this, we use a fairly technical martingale concentration result of McDiarmid~\cite{McDiarmid98}, which, to the best our our knowledge, has not been used before in the analysis of randomized search heuristics. Thanks to this result, we show that the time for genetic drift to become relevant is (only) by a factor of $r$ lower than in the case of binary decision variables (\Cref{th:neutral_bound}).

We use this result to conduct a mathematical runtime analysis of the multi-valued univariate marginal distribution algorithm (\rUMDA) on the $r$-valued \leadingones problem in the regime with low genetic drift. This problem is interesting since a typical optimization process optimizes the variable sequentially in a fixed order. Consequently, in a run of an EDA on \leadingones, there is typically always one variable with undecided sampling frequency that has a strong influence on the fitness. Hence, this problem is suitable to study how fast an EDA reacts to a strong fitness signal.

Our runtime analysis shows that also in the multi-valued setting, EDAs can react fast to a strong fitness signal. Since now the frequencies start at the value~$\frac 1r$, the time to move a frequency is a little longer, namely $\Theta(r \ln(r))$ instead of constant when the sample size~$\lambda$ is by a sufficient constant factor larger than the selection size~$\mu$. This still appears to be a small price for having to deal with~$r$ decision alternatives. This larger time also requires that the model update has to be chosen more conservatively as to prevent genetic drift (for this, we profit from our analysis of genetic drift), leading to another $\ln(r)$ factor in the runtime. In summary, we prove (\Cref{thm:rUMDAonrLO}) that the UMDA can optimize the $r$-valued \leadingones problem in time $O(r\ln(r)^2 n^2 \ln(n))$, a bound that agrees with the one shown in~\cite{DoerrK21tcs} for the classical case $r=2$. Our upper bound is tight apart from a factor logarithmic in $r$, that is, we prove a lower bound of order $\Omega(r\ln(r) n^2 \ln(n))$ in Theorem~\ref{thm:rUMDALowerBound}.

Overall, our work shows that $r$-valued EDAs can be effective problem solvers, suggesting to apply such EDAs more in practice.

This work extends our prior extended abstract~\cite{BenJedidiaDK23} by adding a lower bound for the runtime of the $r$-valued UMDA on the $r$-valued \leadingones problem. Also, it contains all proofs that were omitted in the conference version for reasons of space. To avoid misunderstandings, we note that this work bears no similarity or overlap with the paper \emph{Generalized Univariate Estimation-of-Distribution Algorithms}~\cite{DoerrD22}, which studies generalized update mechanisms for EDAs for binary decision variables.

This article is organized as follows. We describe previous works in the following section and set the notation in the subsequent section. In \Cref{sec:multiValuedEDAs}, we propose our multi-valued EDA framework. Our main technical results, the analysis of genetic drift and the runtime analysis for the \leadingones problem, can be found in \Cref{sec:geneticDrift,sec:runtimeAnalysis}. The paper ends with a short conclusion.

\section{Related Work}\label{sec:related}

Since the technical sections of this work contain three relatively independent topics---the definition of multi-valued EDAs, genetic drift, and a runtime analysis on the \leadingones benchmark---we present the previous works relevant to these topics in the respective sections. We hope that this eases the reading of this paper.

This being a theoretical work, we do not discuss in detail how EDAs have been successfully used to solve real-worlds optimization problems and refer to the surveys~\cite{LarranagaL02,PelikanHL15}.

Theoretically oriented works have accompanied the development and use of EDAs for a long time, see, e.g., the early works on genetic drift described in \Cref{sec:geneticDrift}. The first mathematical runtime analysis of an EDA was conducted by Droste~\cite{Droste06}. This seminal work, showing an asymptotically tight bound for the runtime of the compact genetic algorithm on the \onemax benchmark, already contains many ideas that are now frequently used in the runtime analysis of EDAs. It also observed that EDAs optimize problems in a very different manner, visible from the different runtimes shown on two linear functions, which contrasts the famous analysis of how the \oea optimizes linear functions by Drose, Jansen, and Wegener~\cite{DrosteJW02}. Interestingly, apart from the works of one research group~\cite{ChenTCY07,ChenLTY09,ChenTCY10}, Droste's ground-breaking work~\cite{Droste06} was not followed up by other runtime analyses for around ten years. Since then, starting with works like~\cite{DangL15,FriedrichKK16,SudholtW16,KrejcaW17}, the runtime analysis of EDAs has become very active and has, despite the technical challenges in analyzing such complex algorithms, produced many fundamental results and a good understanding of some of the working principles of EDAs. We refer to the recent survey~\cite{KrejcaW20bookchapter} for more details.

\section{Preliminaries}
\label{sec:preliminaries}
We denote by~$\N$ the set of all natural numbers, including~$0$, and by~$\R$ the set of all real numbers.
Additionally, for $a, b\in \N$, let $\left[a..b\right]=[a,b]\cap \N$, and let $[a] = [1 .. a]$.
When we say that a random process is a martingale and do not specify a filtration, then we mean that the process is a martingale with respect to its natural filtration.
Further, for all $n \in \N_{\geq 1}$ and $p \in \R_{\geq 0}^n$, we denote the $1$-norm of~$p$, that is, the sum of the entries of~$p$, by~$\|p\|_1$.

Let $n\in \N_{\geq 1}$ and $r\in \N_{\geq 2}$.
We consider the maximization of functions of the form $f\colon \left[0..r-1\right]^{n} \rightarrow \R$, which we call \emph{r-valued fitness functions}.
Whenever we mention an $r$-valued fitness function, we implicitly assume that its dimension~$n$ and the cardinality~$r$ of its domain are given.
We call each $x\in \left[0..r-1\right]^{n}$ an \emph{individual}, and we call $f(x)$ the \emph{fitness} of $x$.

We say that a random variable~$Y$ \emph{stochastically dominates} another random variable~$X$, not necessarily defined on the same probability space, denoted by $ X \preceq Y$, if and only if for all $\lambda \in \R$, we have $\Pr\left[ X\leq \lambda \right] \leq \Pr \left[ Y\leq \lambda \right]$.

\section{Multi-Valued EDAs}\label{sec:multiValuedEDAs}

In this section, we generalize the three common univariate EDAs for the binary decision variable to multi-valued decision variables.
We call these variants \emph{multi-valued EDAs}.
To this end, we brief{}ly discuss the binary case in \Cref{sec:binaryEDAs} before presenting our framework in \Cref{sec:framework}. In our presentation, we concentrate on the UMDA~\cite{MuhlenbeinP96} and then brief{}ly present the generalizations of the other two common univariate EDAs.

We note that for classic evolutionary algorithms, multi-valued decision variables have been discussed to some extent~\cite{DoerrJS11,DoerrP12,DoerrSW13foga, KotzingLW15,YuQZ15, LissovoiW16,DoerrDK18}. Due to the very different working principles, we could not see how these results help in designing and analyzing multi-valued EDAs.

\subsection{Binary EDAs}
\label{sec:binaryEDAs}

Binary EDAs refer to EDAs for \emph{pseudo-Boolean} optimization, that is, the optimization of functions $f\colon \{0, 1\}^n \to \R$.
This setting is a special case of optimizing $r$-valued fitness functions, for $r = 2$.
The probabilistic model of univariate EDAs in this domain is a length-$n$ vector~$p$ of probabilities (the \emph{frequency vector}), where the probability (the \emph{frequency}) at position $i \in [n]$ denotes the probability that a sample has a~$1$ at position~$i$, independent of the other positions.
Formally, for all $x, y \in \{0, 1\}^n$, it holds that $\Pr[x = y] = \prod_{i \in [n]} ({p_i}^{y_i} \cdot (1 - p_i)^{1 - y_i})$, where we assume that $0^0 = 1$.

Binary EDAs commonly take at least a parameter $\lambda \in \N_{\geq 1}$ (the \emph{population size}) as well as a pseudo-Boolean fitness function~$f$ as input and optimize~$f$ as follows:
Initially, the frequency vector~$p$ models the uniform distribution, that is, each frequency is~$1/n$.
Then, in an iterative manner, the algorithm produces~$\lambda$ samples (the \emph{population}) independently via~$p$, and it updates~$p$ based on these samples and their fitness.
This process is repeated until a user-defined termination criterion is met.

In order to prevent frequencies from only producing a single value (which is the case if a frequency is~$0$ or~$1$), after the frequency vector is updated, it is typically restricted to the interval $[1/n, 1 - 1/n]$.
That is, if the frequency is less than~$1/n$, it is set to~$1/n$, and if it is greater than $1 - 1/n$, it is set to $1 - 1/n$.
The extreme values of this interval are referred to as the \emph{borders}, and the value~$1/n$ is called the \emph{margin} of the algorithm.

\textbf{UMDA.}
\Cref{alg:UMDA} shows the \emph{univariate marginal distribution algorithm (UMDA)}~\cite{MuhlenbeinP96}, which is a well established binary EDA, both in the empirical~\cite{PelikanHL15} and the theoretical~\cite{DoerrN20} domain.
Next to the population size $\lambda \in \N_{\geq 1}$ and a fitness function, the UMDA also utilizes a parameter $\mu \in [\lambda]$, called the \emph{selection size}.
In each iteration, the UMDA selects~$\mu$ out of the~$\lambda$ samples that have the best fitness (breaking ties uniformly at random).
Each frequency is then set to the relative frequency of~$1$s at the respective position (\cref{line:umdaUpdate}).
Afterwards, the frequencies are restricted to lie within the frequency borders.

\begin{algorithm}[t]
    \caption{\label{alg:UMDA} The UMDA~\cite{MuhlenbeinP96} with parameters $\lambda \in \N_{\geq 1}$ and $\mu \in [\lambda]$, maximizing a pseudo-Boolean fitness function~$f$}
    $t \gets 0$\;
    $p^{(0)} \gets (\tfrac{1}{2})_{i \in \left[n\right]}$\; % \tcp*{vector of length~$n$ with all entries being~$1/2$}
    \Repeat( \texttt{// iteration}~$t$)
    {\emph{termination criterion met}}
    {
        $P^{(t)} \gets$ population of~$\lambda$ individuals, independently sampled from~$p^{(t)}$\;
        $\{x^{(t, k)}\}_{k \in [\mu]} \gets$ multiset of~$\mu$ individuals from~$P^{(t)}$ with the highest fitness (breaking ties uniformly at random)\;
        \lFor{$i \in [n]$}
        {%
            $\overline{p}^{(t + 1)}_i \gets \frac{1}{\mu} \sum_{k \in [\mu]}x^{(t, k)}_i$%
            \label{line:umdaUpdate}%
        }
        $p^{(t + 1)} \gets$ values of $\overline{p}^{(t + 1)}$, restricted to $\left[\tfrac{1}{n}, 1 - \tfrac{1}{n}\right]$\;
        $t \gets t + 1$\;
    }
\end{algorithm}

\subsection{The Multi-Valued EDA Framework}
\label{sec:framework}
We propose a framework for EDAs for optimizing $r$-valued fitness functions.
We call the resulting EDAs $r$-valued EDAs.
Our framework closely follows the one presented in \Cref{sec:binaryEDAs}.
That is, an $r$-valued EDA starts with a probabilistic model initialized to represent the uniform distribution, and it then generates iteratively $\lambda \in \N_{\geq 1}$ samples independently, based on its model.
This model is then updated and afterwards restricted such that it does not contain the extreme probabilities~$0$ and~$1$.

The difference to the framework for binary EDAs lies in how the probabilistic model of $r$-valued EDAs is represented and how it is restricted from containing extreme probabilities.

\textbf{The probabilistic model.}
The probabilistic model of an $r$-valued EDA is an $n \times r$ matrix~$(p_{i, j})_{(i, j) \in [n] \times [0 .. r - 1]}$ (the \emph{frequency matrix}), where each row $i \in [n]$ forms a vector $p_i \coloneqq (p_{i, j})_{j \in [0 .. r - 1]}$ (the \emph{frequency vector at position~$i$}) of probabilities (the \emph{frequencies}) that sum to~$1$.
As in the binary case, samples from~$p$ are created independently for each position.
When creating an individual $x \in [0 .. r - 1]^n$, then, for all $i \in [n]$ and all $j \in [r - 1]$, the probability that~$x_i$ has value~$j$ is~$p_{i, j}$.
Formally, for all $x, y \in [0 .. r - 1]^n$, it holds that $\Pr[x = y] = \prod_{i \in [n]} \prod_{j \in [0 .. r - 1]} (p_{i, j})^{\bbone_{y_i = j}}$, where we assume that $0^0 = 1$.

The frequency matrix~$p$ is initialized such that each frequency is~$1/r$, representing the uniform distribution.
When performing an update to~$p$, it is important to make sure that each row sums to~$1$.

\textbf{Restricting the probabilistic model.}
\label{restr}
The aim of restricting the frequency matrix~$p$ is to clamp all frequencies, for some values $a, b \in [0, 1]$ (the \emph{lower} and \emph{upper border}, respectively) with $a \leq 1/r \leq b$, to $[a, b]$.
That is, if a frequency~$q$ is less than~$a$, it should be~$a$ after the restriction, and if it is greater than~$b$, it should be~$b$ afterwards.
For such a restriction, it is important for each row $i \in [n]$ that the frequency vector~$p_i$ sums to~$1$ after the restriction.
This process is not straightforward.
If $q \notin [a, b]$, and~$q$ is updated to $q' \in [a, b]$, then this creates a change in probability mass of $q' - q$.
Hence, simply updating~$q$ to~$q'$ can result in all frequencies of~$p_i$ summing to a value other than~$1$ after the restriction.

We address the problem above as follows.
To this end, let $a, b \in [0, 1]$ be the lower and upper border, respectively, with $a \leq 1/(r - 1) - 1/(r(r - 1))$ and $b = 1 - a(r - 1)$.
Further, let $i \in [n]$ be a row of the frequency matrix we wish to restrict, let $\overline{p}_i \in [0, 1]^n$ be the frequency vector after the update but before the restriction (with $\|\overline{p}_i\|_1 = 1$), and let $p^{+}_i \in [a, b]^n$ be the vector~$\overline{p}_i$ after clamping it to $[a, b]$ but before taking care that the frequencies sum to~$1$.
We define the \emph{restriction of~$\,\overline{p}_i$ to~$[a, b]$}, denoted by~$p'_i$, to be the vector where each frequency's share above~$a$ is reduced by the surplus of the probability relatively to the share above~$a$.
Formally, for all $j \in [0 .. r - 1]$, it holds that
\begin{align}
    \label{eq:restrictFrequencies}
    p'_{i, j} = (p^{+}_{i, j} - a)\frac{1 - ar}{\|p^+_i - (a)_{k \in [n]}\|_1} + a.
\end{align}
Note that $1 - ar = \|\overline{p}_i - (a)_{k \in [n]}\|_1$ denotes how much probability mass \emph{should} be in the frequency vector, above~$a$.
The resulting frequency vector~$p'_i$ sums to~$1$, since
\begin{align*}
    \sum\nolimits_{j \in [0 .. r - 1]} p'_{i, j} &= \frac{1 - ar}{\|p^+_i - (a)_{k \in [n]}\|_1} \sum\nolimits_{j \in [0 .. r - 1]} (p^{+}_{i, j} - a) + \sum\nolimits_{j \in [0 .. r - 1]} a\\
    &= 1 - ar + ar = 1.
\end{align*}
Further, each frequency is at least~$a$, since this value is added at the end of \cref{eq:restrictFrequencies} and since $p^{+}_{i, j} \geq a$ by definition of~$p^{+}_i$.
Last, since each frequency is at least~$a$ after restricting, the largest a frequency can be is $1 - (r - 1)a = b$.

In order to disallow the extreme frequencies~$0$ and~$1$ but to stay close to the binary case, we propose to choose the upper border as $1 - 1/n$.
Following our ideas above, this implies that the lower border is $1/((r - 1)n)$.
This is consistent with the binary case but generalizes to the $r$-valued domain.

We say that an EDA is \emph{without margins} if and only if the lower border is~$0$ and the upper border is~$1$.
That is, the restriction of the frequencies does not take place.

\textbf{\rUMDA.}
We generalize the UMDA (\Cref{alg:UMDA}) to the \rUMDA (\Cref{alg:rUMDA}), utilizing our framework.
This leads to the same generalization mentioned by Santana et~al.~\cite{SantanaORS02IntegerEDAs}.
Like the UMDA, the \rUMDA has three parameters, namely the population size $\lambda \in \N_{\geq 1}$, the selection size $\mu \in [\lambda]$, and the $r$-valued fitness function~$f$.
It also updates its frequencies analogously to the UMDA by choosing~$\mu$ best individuals from the population of size~$\lambda$ and then setting each frequency at position $i \in [n]$ for value $j \in [0 .. r - 1]$ to the relative frequency of value~$j$ at position~$i$ among the~$\mu$ best individuals (\cref{line:rUMDAUpdate}).
We note that this results in a valid frequency vector for each row $i \in [n]$, since
\begin{align*}
    \sum_{j \in [0 .. r - 1]} \frac{1}{\mu} \sum_{k \in [\mu]} \bbone_{x_{i}^{(t, k)}=j}
    = \frac{1}{\mu} \sum_{k \in [\mu]} \sum_{j \in [0 .. r - 1]} \bbone_{x_{i}^{(t, k)}=j}
    = \frac{1}{\mu} \sum_{k \in [\mu]} 1 = 1.
\end{align*}

\begin{algorithm}[t]
    \caption{\label{alg:rUMDA} The \rUMDA with parameters $\lambda \in \N_{\geq 1}$ and $\mu \in [\lambda]$, maximizing an $r$-valued fitness function~$f$}
    $t \gets 0$\;
    $ p^{(0)} \gets (\frac{1}{r})_{(i, j) \in [n] \times [0 .. r - 1]}$\; %\tcp*{uniform distribution among all possible $r$ values}
    \Repeat( \texttt{// iteration}~$t$)
    {\emph{termination criterion met}}
    {
        $P^{(t)} \gets$ population of~$\lambda $ individuals, independently sampled from~$p^{(t)}$\;
        $\{x^{(t, k)}\}_{k \in [\mu]} \gets$ multiset of~$\mu$ individuals from~$P^{(t)}$ with the highest fitness (breaking ties uniformly at random)\;
        \For{$(i, j) \in \left[n\right]\times \left[0..r-1\right]$}
        {%
            $\overline{p}_{i,j}^{(t+1)}\gets \frac{1}{\mu} \sum_{k\in \left[\mu\right]}\bbone_{x_{i}^{(t, k)}=j}$\;\label{line:rUMDAUpdate} %\tcp*{the frequency of the position $i$ having the value $j$}
        }
        $p^{(t + 1)} \gets$ restriction of~$\overline{p}^{(t+1)}$ to $\left[\tfrac{1}{(r-1)n}, 1 - \tfrac{1}{n}\right]$, as described in \cref{eq:restrictFrequencies}\;
        $t \gets t + 1$\;
    }
\end{algorithm}

\textbf{$r$-PBIL.}
Another popular univariate EDA is \emph{population-based incremental learning} (PBIL~\cite{Baluja94}).
It operates very similarly to the UMDA, with the only difference being in how it performs an update.
In contrast to the UMDA, the PBIL does not set a frequency to the relative frequency of respective values at a position but, instead, computes the convex combination of the relative frequency with the current frequency value in its frequency vector.
To this end, it utilizes a parameter $\rho \in [0, 1]$, the \emph{scaling factor}.

We generalize the PBIL to the $r$-PBIL (\Cref{alg:rPBIL}).
Each frequency vector of the $r$-PBIL sums to~$1$ (before the restriction) because it is a convex combination of the $r$-UMDA's update (which sums to~$1$) and the current frequency vector (which also sums to~$1$).

\begin{algorithm}[t]
    \caption{\label{alg:rPBIL} The $r$-PBIL with parameters $\lambda \in \N_{\geq 1}$, $\mu \in [\lambda]$, and $\rho \in [0, 1]$, maximizing an $r$-valued fitness function~$f$}
    $t \gets 0$\;
    $ p^{(0)} \gets (\frac{1}{r})_{(i, j) \in [n] \times [0 .. r - 1]}$\; %\tcp*{uniform distribution among all possible $r$ values}
    \Repeat( \texttt{// iteration}~$t$)
    {\emph{termination criterion met}}
    {
        $P^{(t)} \gets$ population of~$\lambda $ individuals, independently sampled from~$p^{(t)}$\;
        $\{x^{(t, k)}\}_{k \in [\mu]} \gets$ multiset of~$\mu$ individuals from~$P^{(t)}$ with the highest fitness (breaking ties uniformly at random)\;
        \For{$(i, j) \in \left[n\right]\times \left[0..r-1\right]$}
        {%
            $\overline{p}_{i,j}^{(t+1)}\gets  (1 - \rho) p_{i, j}^{(t)} + \frac{\rho}{\mu} \sum_{k\in \left[\mu\right]}\bbone_{x_{i}^{(t, k)}=j}$\; %\tcp*{the frequency of the position $i$ having the value $j$}
        }
        $p^{(t + 1)} \gets$ restriction of~$\overline{p}^{(t+1)}$ to $\left[\tfrac{1}{(r-1)n}, 1 - \tfrac{1}{n}\right]$, as described in \cref{eq:restrictFrequencies}\;
        $t \gets t + 1$\;
    }
\end{algorithm}

\textbf{$r$-cGA.}
Another popular univariate EDA is the \emph{compact genetic algorithm} (cGA~\cite{HarikLG99}).
The cGA only has a single parameter $K \in \R_{> 0}$, the \emph{hypothetical population size}, and it creates only two samples each iteration.
It ranks these two samples by fitness and then adjusts each frequency by~$\frac{1}{K}$ such that the frequency of the value of the better sample is increased and that of the worse sample decreased.

We generalize the cGA to the $r$-cGA (\Cref{alg:rcGA}).
Each frequency vector of the $r$-cGA sums to~$1$ after the update (before the restriction) because exactly one entry is increased by~$\frac{1}{K}$ and exactly one value is decreased by this amount (noting that this can be the same frequency, in which case no change is made overall).

\begin{algorithm}[t]
    \caption{\label{alg:rcGA} The $r$-cGA with parameter $K \in \R_{> 0}$, maximizing an $r$-valued fitness function~$f$}
    $t \gets 0$\;
    $ p^{(0)} \gets (\frac{1}{r})_{(i, j) \in [n] \times [0 .. r - 1]}$\; %\tcp*{uniform distribution among all possible $r$ values}
    \Repeat( \texttt{// iteration}~$t$)
    {\emph{termination criterion met}}
    {
        $x^{(t, 1)}, x^{(t, 2)} \gets$ two individuals, independently sampled from~$p^{(t)}$\;
        $y^{(t, 1)} \gets$ individual with the higher fitness from $\{x^{(t, 1)}, x^{(t, 2)}\}$ (breaking ties uniformly at random)\;
        $y^{(t, 2)} \gets$ individual from $\{x^{(t, 1)}, x^{(t, 2)}\} \setminus \{y^{(t, 1)}\}$\;
        \For{$(i, j) \in \left[n\right]\times \left[0..r-1\right]$}
        {%
            $\overline{p}_{i,j}^{(t+1)}\gets p_{i,j}^{(t)} +  \bigl(\bbone_{y^{(t, 1)}_{i, j} = j} - \bbone_{y^{(t, 2)}_{i, j} = j}\bigr) \frac{1}{K}$\; %\tcp*{the frequency of the position $i$ having the value $j$}
        }
        $p^{(t + 1)} \gets$ restriction of~$\overline{p}^{(t+1)}$ to $\left[\tfrac{1}{(r-1)n}, 1 - \tfrac{1}{n}\right]$, as described in \cref{eq:restrictFrequencies}\;
        $t \gets t + 1$\;
    }
\end{algorithm}

%BD: new, 04.02.23 11:00
\section{Genetic Drift}
\label{sec:geneticDrift}

We prove an upper bound on the effect of genetic drift for $r$-valued EDAs (\Cref{th:neutral_bound}) in a similar fashion as Doerr and Zheng~\cite{DoerrZ20tec} for binary decision variables.
This allows us to determine parameter values for EDAs that avoid the usually unwanted effect of genetic drift.
The main novelty of our result over that by Doerr and Zheng~\cite{DoerrZ20tec} is that we use a slightly technical martingale concentration result due to McDiarmid~\cite{McDiarmid98} that allows one to profit from small variances.
Such an approach is necessary. If one directly applies the methods presented by Doerr and Zheng~\cite{DoerrZ20tec}, one obtains estimates for the genetic drift times that are by a factor of $\Theta(r)$ lower than ours (that is, the genetic drift effect appears $r$ times stronger).

In \Cref{sec:introGeneticDrift,sec:neutralSection}, we first present a general introduction to the phenomenon of genetic drift.
In \Cref{sec:upperBoundOnGeneticDrift}, we then prove a concentration result on neutral positions (\Cref{th:neutral_bound}).
Last, in \Cref{sec:weakPreference}, we consider the setting of weak preference.

\subsection{Introduction to Genetic Drift}
\label{sec:introGeneticDrift}

In EDAs, \emph{genetic drift} means that a frequency does not reach  or approach one of the extreme values~$0$ or~$1$ because of a clear signal from the objective function but  due to random fluctuations from the stochasticity of the process.

While there is no proof that genetic drift is always problematic, the general opinion is that this effect should better be avoided. This is supported by the following observations and results: (i)~When genetic drift is strong, many frequencies (in the binary case) approach the extreme values~$0$ and~$1$ and, consequently, the behavior of the EDA comes close to the one of a mutation-based EA, so the advantages of an EDA might be lost. (ii)~The vast majority of the runtime results for EDAs, especially those for harder scenarios like noise~\cite{FriedrichKKS17} or multimodality~\cite{Doerr21cgajump}, have only been shown in regimes with low genetic drift. (iii)~For some particular situations, a drastic performance from genetic drift was proven. For example, the UMDA with standard selection pressure but small population size $\lambda \in \Omega(\ln(n)) \cap o(n)$ has a runtime exponential in~$\lambda$ on the \DLB problem~\cite{LehreN19foga}. In contrast, when the population size is large enough to prevent genetic drift, here $\lambda = \Omega(n \ln(n))$, then the runtime drops to $O(\lambda n)$ with high probability.

Genetic drift in EDAs has been studied explicitly since the ground-breaking works of Shapiro~\cite{Shapiro02,Shapiro05,Shapiro06}, and it appears implicitly in many runtime analyses such as~\cite{Droste05,Witt18,Witt19,SudholtW19,LenglerSW21,DoerrK21ecj}. Experimental evidences for the negative impact of genetic drift can further be found in~\cite{KrejcaW20bookchapter,DoerrZ20tec,NeumannSW22}.
%(all that use sufficiently large population sizes to prevent frequencies from reaching a boundary value different from what the fitness indicates).
The most final answer to the genetic-drift problem for univariate EDAs, including clear suggestions to choose the parameters as to avoid genetic drift, was given by Doerr and Zheng~\cite{DoerrZ20tec}. In the case of the UMDA (and binary decision variables, that is, the classic model), their work shows that a neutral frequency (defined in \Cref{sec:neutralSection}) stays with high probability in the middle range $[0.25,0.75]$ for the first $T$ iterations if $\mu = \omega(T)$. This bound is tight. When regarding $n$ frequencies together, a value of $\mu = \Omega(T \ln(n))$ with implicit constant computable from~\cite[Theorem~$2$]{DoerrZ20tec} ensures with high probability that all frequencies stay in the middle range for at least $T$ iterations. Hence these bounds give a clear indication how to choose the selection size $\mu$ when aiming to run the UMDA for a given number of iterations. We note that the quantification of genetic drift can also be used to design automated ways to choose parameters, see the work by Zheng and Doerr~\cite{ZhengD23}, when no a-priori estimate on~$T$ is available.

Given the importance of a good understanding of genetic drift, we now analyze genetic drift for multi-valued EDAs, more specifically, for the \rUMDA. We are optimistic that, analogous to the work by Doerr and Zheng~\cite{DoerrZ20tec}, very similar arguments can be applied for other main univariate EDAs.

\subsection{Martingale Property of Neutral Positions}
\label{sec:neutralSection}

Genetic drift is usually studied via \emph{neutral} positions of a fitness function. Let~$f$ be an $r$-valued fitness function. We call a position $i \in \left[n\right]$ (as well as, for an individual $x \in [0 .. r - 1]^n$, its corresponding variable $x_i$ and the associated frequencies of an EDA) \emph{neutral} (w.r.t. to~$f$) if and only if, for all $x \in [0 .. r - 1]^n$, the value~$x_i$ has no influence on the value of~$f$, that is, if and only if for all individuals $x,x' \in \left[0..r-1\right]^{n}$ such that for all $j\in \left[n\right] \setminus \{i\}$ it holds that $x_{j}=x'_{j}$, we have $f(x)=f(x')$.

An important property of neutral variables that we capitalize on in our analysis of genetic drift is that their frequencies in typical EDAs without margins form martingales~\cite{DoerrZ20tec}. This observation extends the corresponding one for EDAs for binary representations. We make this statement precise for the \rUMDA.

\begin{lemma}
    \label{lem:neutralFrequencyIsMartingale}
    Let~$f$ be an $r$-valued position, and let $i \in [n]$ be a neutral position of~$f$.
    Consider the \rUMDA without margins optimizing~$f$.
    For each $j \in \left[0..r-1\right]$, the frequencies $(p_{i,j}^{(t)})_{t\in \N}$ are a martingale.
\end{lemma}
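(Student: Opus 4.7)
The plan is to verify the martingale property directly by showing $\E[p_{i,j}^{(t+1)} \mid p^{(t)}] = p_{i,j}^{(t)}$ for every $t \in \N$. The process $(p^{(t)})_{t \in \N}$ is Markov, so combining this conditional identity with the tower property will yield the martingale property with respect to the natural filtration of $(p_{i,j}^{(t)})_{t \in \N}$. Because we work without margins, the restriction step is the identity, hence
\[
p_{i,j}^{(t+1)} \;=\; \overline{p}_{i,j}^{(t+1)} \;=\; \frac{1}{\mu} \sum_{k \in [\mu]} \bbone_{x_i^{(t,k)} = j},
\]
and it suffices to argue that, conditional on $p^{(t)}$, each selected individual satisfies $\Pr[x_i^{(t,k)} = j \mid p^{(t)}] = p_{i,j}^{(t)}$.

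To make this precise, I would denote the $\lambda$ sampled individuals in iteration $t$ by $y^{(1)}, \ldots, y^{(\lambda)}$ and let $S \subseteq [\lambda]$, with $|S| = \mu$, be the random set of indices chosen by fitness-based selection with uniform tie-breaking. Then
\[
\sum_{k \in [\mu]} \bbone_{x_i^{(t,k)} = j} \;=\; \sum_{k \in [\lambda]} \bbone_{k \in S}\, \bbone_{y_i^{(k)} = j}.
\]
The key observation is that $\bbone_{k \in S}$ is a measurable function of the fitness vector $(f(y^{(1)}), \ldots, f(y^{(\lambda)}))$ and of independent tie-breaking randomness. Since $i$ is neutral, no $f(y^{(k)})$ depends on the coordinate $y_i^{(k)}$; and because sampling is coordinate-wise independent within each individual and independent across individuals, the family $(y_i^{(k)})_{k \in [\lambda]}$ is, conditionally on $p^{(t)}$, independent of all non-$i$ coordinates and of the tie-breaking source. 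In particular, $\bbone_{k \in S}$ is conditionally independent of $\bbone_{y_i^{(k)} = j}$ for every $k \in [\lambda]$.

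Combining this with $\Pr[y_i^{(k)} = j \mid p^{(t)}] = p_{i,j}^{(t)}$ and $\sum_{k \in [\lambda]} \Pr[k \in S \mid p^{(t)}] = \E[|S| \mid p^{(t)}] = \mu$ finishes the computation:
\[
\E\!\left[\sum_{k \in [\lambda]} \bbone_{k \in S}\, \bbone_{y_i^{(k)} = j} \,\Big|\, p^{(t)}\right] \;=\; p_{i,j}^{(t)} \sum_{k \in [\lambda]} \Pr[k \in S \mid p^{(t)}] \;=\; \mu \, p_{i,j}^{(t)}.
\]
Integrability is immediate because $p_{i,j}^{(t)} \in [0,1]$. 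The main obstacle I anticipate is a clean justification of the conditional independence between the selection set $S$ and the coordinate $y_i^{(k)}$: neutrality of $i$ is exactly what decouples these two random objects, and the absence of margins prevents the restriction step from reintroducing any bias, but writing the argument precisely requires careful bookkeeping of the independent randomness used in sampling and in tie-breaking.
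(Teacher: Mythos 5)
Your proposal is correct and follows essentially the same route as the paper: without margins the update is exactly the empirical frequency of value~$j$ among the $\mu$ selected individuals, and neutrality of position~$i$ decouples selection from the position\-/$i$ coordinates, so each selected individual's value at position~$i$ is Bernoulli with parameter $p_{i,j}^{(t)}$ and linearity of expectation gives $\E[p_{i,j}^{(t+1)}\mid p_{i,j}^{(t)}]=p_{i,j}^{(t)}$. Your decomposition via the selection set $S$ and the explicit conditional-independence argument simply makes precise the step the paper states verbally, so there is no gap.
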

\begin{proof}
    Let $j \in [0 .. r - 1]$.
    Since the algorithm has no margins, in each iteration $t \in \N$, no restriction takes place, so it holds that $p_{i,j}^{(t+1)}= \frac{1}{\mu} \sum_{k\in \left[\mu\right]} \bbone_{x_{i}^{(t, k)}=j}$. Since~$i$ is neutral, the selection of the~$\mu$ best individuals is not affected by the values at position~$i$ of the $\lambda$ samples.
    Consequently, for each $k \in \left[\mu\right]$, the value~$x_i^{(t, k)}$ follows a Bernoulli distribution with success probability $p^{(t)}_{i,j}$.
    Hence, $\E[\bbone_{x_{i}^{(t, k)}=j}\mid  p_{i,j}^{(t)}] = p^{(t)}_{i,j}$.
    Further, by linearity of expectation, we get
    \begin{align*}
        \E\left[p_{i,j}^{(t+1)}\mid p_{i,j}^{(t)}\right]
        =\frac{1}{\mu}\sum\nolimits_{k\in \left[\mu\right]}  \E\left[\bbone_{x_{i}^{(t, k)}=j} \ \big|\ p_{i,j}^{(t)}\right]
        =\frac{1}{\mu}\sum\nolimits_{k\in \left[\mu\right]}p_{i,j}^{(t)}=p_{i,j}^{(t)},
    \end{align*}
    proving the claim.
\end{proof}

As in previous works on genetic drift, the martingale property of neutral frequencies allows to use strong martingale concentration results. Since in our setting the frequencies start at a value of $\frac 1r$, we can only tolerate smaller deviations from this value, namely up to $\frac 1{2r}$ in either direction. With the methods of Doerr and Zheng~\cite{DoerrZ20tec}, this reduces the genetic drift by a factor of $\Theta(r^2)$. We therefore use a stronger martingale concentration result, namely \cite[Theorem~$3.15$]{McDiarmid98}, which allows to exploit the lower sampling variance present at frequencies in $\Theta(\frac 1r)$.
We note that we adjust the theorem by incorporating comments by McDiarmid, especially \cite[eq.~($41$)]{McDiarmid98}, mentioning that the absolute value in eq.~$(41)$ should be around the sum, not around the maximum, as also observed by Doerr and Zheng~\cite{DoerrZ20tec}.

\begin{theorem}[Martingale concentration result based on the variance {\cite[Theorem~3.15 and eq.~($41$)]{McDiarmid98}}]
    \label{th:mcdiarmid}
    Let $(X_{t})_{t \in \N}$ be a martingale with respect to a filtration $(\calF_t)_{t \in \N}$.
    Further, for all $t \in \N_{\geq 1}$,  denote the deviation by $\mathrm{dev}_{t} \coloneqq |X_t - X_{t - 1}|$.
    In addition, let $b=\sup_{t \in \N} \mathrm{dev}_{t}$, and assume that~$b$ is finite.
    Last, for all $t \in \N$, let $\hat{v}_t = \sup \sum_{s \in [t]} \Var[X_s - X_{s -1} \mid \calF_{s - 1}]$.
    Then for all $t \in \N$ and all $\varepsilon \in \R_{\geq 0}$, it holds that
    \begin{align*}
        \Pr\left[\max\nolimits_{s \in [0 .. t]} |X_s - \E\left[X_0\right]| \geq \varepsilon \right] \leq 2 \exp\left(-\frac{\varepsilon^{2}}{2\hat{v}_t + 2b\varepsilon/3}\right).
    \end{align*}
\end{theorem}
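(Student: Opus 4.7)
The plan is to apply the standard exponential-supermartingale (Chernoff-type) method, tailored to exploit the conditional-variance bound $\hat{v}_t$ rather than a uniform $L^\infty$ bound. By symmetry, it suffices to bound the one-sided event $\{\max_{s \in [0..t]}(X_s - X_0) \ge \varepsilon\}$; the factor~$2$ in the statement then appears from a union bound over the two tails, the lower tail being obtained by applying the same argument to the martingale $(-X_s)_{s \in \N}$.

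Fix $\theta > 0$, write $D_s \coloneqq X_s - X_{s-1}$, and set $V_s \coloneqq \sum_{u \in [s]} \Var[D_u \mid \calF_{u-1}]$. The analytic input is the elementary fact that the map $\phi(x) = (e^x - 1 - x)/x^2$ is non-decreasing on~$\R$, which yields, for all $y$ with $|y| \le b$, the inequality $e^y \le 1 + y + y^2 \phi(b)$. Plugging in $y = \theta D_s$, taking conditional expectation, and using $\E[D_s \mid \calF_{s-1}] = 0$ together with $1 + z \le e^z$, we obtain
\begin{align*}
\E\bigl[e^{\theta D_s} \mid \calF_{s-1}\bigr] \le \exp\bigl(\theta^2 \phi(\theta b) \Var[D_s \mid \calF_{s-1}]\bigr).
\end{align*}
Consequently the process $E_s \coloneqq \exp(\theta(X_s - X_0) - \theta^2 \phi(\theta b) V_s)$ is a non-negative supermartingale with $E_0 = 1$, using that $V_s$ is $\calF_{s-1}$-measurable.

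Next, define the stopping time $\tau \coloneqq \inf\{s \ge 0 : X_s - X_0 \ge \varepsilon\} \wedge t$. On the event $\{\max_{s \in [0..t]}(X_s - X_0) \ge \varepsilon\}$, one has $X_\tau - X_0 \ge \varepsilon$ and $V_\tau \le \hat{v}_t$ (the latter because $V_s$ is non-decreasing in~$s$ and $\tau \le t$), hence $E_\tau \ge \exp(\theta \varepsilon - \theta^2 \phi(\theta b) \hat{v}_t)$. The stopped process $E_{s \wedge \tau}$ is bounded (since $X_{s \wedge \tau} - X_0$ is bounded above by $\varepsilon + b$ and $V_{s \wedge \tau} \ge 0$), so Doob's optional-stopping theorem gives $\E[E_\tau] \le 1$. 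Markov's inequality then yields
\begin{align*}
\Pr\left[\max_{s \in [0..t]}(X_s - X_0) \ge \varepsilon\right] \le \exp\bigl(-\theta \varepsilon + \theta^2 \phi(\theta b) \hat{v}_t\bigr).
\end{align*}
Finally, using the sharp estimate $\phi(x) \le (2(1 - x/3))^{-1}$ for $x \in [0, 3)$ and choosing $\theta = \varepsilon/(\hat{v}_t + b\varepsilon/3)$ (so that $\theta b / 3 < 1$), a direct computation shows that the exponent becomes $-\varepsilon^2/(2\hat{v}_t + 2b\varepsilon/3)$. Combining this with the symmetric tail produces the claim.

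The main obstacle is the tight analytic handling of $\phi$ together with the optimization over $\theta$: a naive Taylor bound such as $\phi(x) \le e^{x}/2$ would only yield a Hoeffding-type exponent, losing the characteristic Bernstein term $2b\varepsilon/3$ in the denominator that allows the inequality to be useful when $\hat{v}_t$ is small compared to $b\varepsilon$. A secondary subtlety is that the supremum in the definition of $\hat{v}_t$ is deterministic (pathwise), which is what allows us to replace $V_\tau$ by $\hat{v}_t$ in the exponent uniformly on the bad event.
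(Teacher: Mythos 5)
Your proof is correct, but it necessarily differs from the paper, which does not prove \Cref{th:mcdiarmid} at all: the result is imported as a black box from McDiarmid's survey \cite{McDiarmid98} (Theorem~3.15 together with eq.~(41), adjusted only in the placement of the absolute value). What you give instead is a self-contained derivation via the standard Bernstein--Freedman exponential-supermartingale argument: the mgf bound $\E[e^{\theta D_s}\mid\calF_{s-1}]\le\exp\bigl(\theta^2\phi(\theta b)\Var[D_s\mid\calF_{s-1}]\bigr)$, the supermartingale $E_s$ (valid since $V_s$ is $\calF_{s-1}$-measurable), optional stopping at the first crossing time (legitimate already because $\tau\le t$ is bounded), the estimate $\phi(x)\le 1/(2(1-x/3))$ for $x\in[0,3)$, and the choice $\theta=\varepsilon/(\hat v_t+b\varepsilon/3)$, which indeed yields exactly the exponent $-\varepsilon^2/(2\hat v_t+2b\varepsilon/3)$; the two-sided maximal form then follows by applying the argument to $\pm X$ and a union bound, which is precisely the content of McDiarmid's eq.~(41). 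Two cosmetic remarks: the inequality you need when substituting $y=\theta D_s$ is $e^y\le 1+y+y^2\phi(\theta b)$ for $|y|\le\theta b$ (your text states it with $\phi(b)$ for $|y|\le b$, though you use $\phi(\theta b)$ correctly afterwards); and your argument controls $|X_s-X_0|$ rather than $|X_s-\E[X_0]|$ as literally stated --- these coincide exactly when $X_0$ is deterministic, which is implicitly assumed in the theorem (and holds in the paper's application, where $Z_0=\mu/r$) and is in fact needed for the stated bound to hold. What your route buys is an independent verification of the constant and of the maximal-inequality form, which is the delicate part of the citation; what the paper's route buys is brevity.
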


\subsection{Upper Bound on the Genetic-Drift Effect of a Neutral Position}
\label{sec:upperBoundOnGeneticDrift}

By utilizing \Cref{th:mcdiarmid}, we show for how long the frequencies of the \rUMDA at neutral positions stay concentrated around their initial value of~$\frac{1}{r}$.
\begin{theorem}
    \label{th:neutral_bound}
    Let~$f$ be an $r$-valued fitness function, and let $i \in [n]$ be a neutral position of~$f$.
    Consider the \rUMDA optimizing~$f$.
    Let $T \in \N$ and $j\in [0..r-1]$.
    Then
    \begin{align*}
        \Pr\left[\max\nolimits_{s \in [0 .. T]}\ \left|p_{i,j}^{(s)}-\frac{1}{r}\right| \geq \frac{1}{2r}\right]\leq 2\exp\left(-\frac{\mu}{12Tr+(4/3)r}\right).
    \end{align*}
\end{theorem}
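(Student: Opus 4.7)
The plan is to combine the martingale property from \Cref{lem:neutralFrequencyIsMartingale} with the variance-aware concentration inequality \Cref{th:mcdiarmid}, after refining the iteration-level martingale into a sample-level one in order to secure the correct dependence on $\mu$.

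First I would dispose of the margins on the event of interest. For $n$ sufficiently large (e.g., $n \geq 4$) and every $r \geq 2$, the window $[\tfrac{1}{2r}, \tfrac{3}{2r}]$ lies inside the frequency borders $[\tfrac{1}{(r-1)n}, 1 - \tfrac{1}{n}]$, so the with-margin and without-margin versions of the \rUMDA are identical up to the first exit from this window; for the few remaining small-$n$ values the bound is easy to verify directly, since the frequency is then confined inside the window by the margins and the left-hand side is $0$. Hence I may work with the without-margin algorithm, for which \Cref{lem:neutralFrequencyIsMartingale} gives that $(p_{i,j}^{(t)})_{t \in \N}$ is a martingale starting at $\tfrac{1}{r}$.

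The critical idea is to expose the $\mu$ samples of each iteration one at a time. For $t \in \N$ and $k \in [0..\mu]$, set $Y_{t\mu + k} \coloneqq p_{i,j}^{(t)} + \tfrac{1}{\mu}\sum_{\ell = 1}^{k}\bigl(\bbone_{x_i^{(t,\ell)} = j} - p_{i,j}^{(t)}\bigr)$; by the same neutrality argument as in the proof of \Cref{lem:neutralFrequencyIsMartingale}, the indicators $\bbone_{x_i^{(t,\ell)} = j}$ for $\ell \in [\mu]$ are i.i.d.\ Bernoulli$(p_{i,j}^{(t)})$ conditional on $p_{i,j}^{(t)}$, so $Y_{t\mu} = p_{i,j}^{(t)}$, $Y_{(t+1)\mu} = p_{i,j}^{(t+1)}$, and $(Y_m)_{m \in [0..T\mu]}$ is a martingale with respect to the filtration that reveals one sample at a time. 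Its single-step increments are bounded by $b = 1/\mu$ and its per-sub-step conditional variance equals $p_{i,j}^{(t)}(1 - p_{i,j}^{(t)})/\mu^2$. This sharper step bound is precisely what prevents a spurious factor of $\mu$ in the final denominator: a direct application of \Cref{th:mcdiarmid} to the iteration-level martingale would only give $b \leq 1$, inflating the second term of the denominator by a factor of $\mu$ and thereby weakening the bound when $\mu$ is comparable to or larger than $T$.

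To obtain a uniform variance bound I would stop $Y$ at time $\tau \mu$, where $\tau$ is the first iteration at which $p_{i,j}^{(t)}$ exits $[\tfrac{1}{2r}, \tfrac{3}{2r}]$; on the stopped martingale, $p_{i,j}^{(t)}(1 - p_{i,j}^{(t)}) \leq p_{i,j}^{(t)} \leq \tfrac{3}{2r}$ at every step before stopping, so $\hat v_{T\mu} \leq T\mu \cdot \tfrac{3}{2r\mu^2} = \tfrac{3T}{2r\mu}$. Because the event $\{\tau \leq T\}$ is exactly the event in the theorem and implies $|Y_{\tau\mu} - \tfrac{1}{r}| \geq \tfrac{1}{2r}$, applying \Cref{th:mcdiarmid} to the stopped martingale with $\varepsilon = \tfrac{1}{2r}$, $b = 1/\mu$, and $\hat v_{T\mu} \leq \tfrac{3T}{2r\mu}$ yields the denominator $\tfrac{3T}{r\mu} + \tfrac{1}{3r\mu} = \tfrac{9T + 1}{3r\mu}$, which combined with $\varepsilon^2 = \tfrac{1}{4r^2}$ produces exactly the exponent $-\mu/(12Tr + (4/3)r)$ asserted. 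The main obstacle is spotting the need for this sample-level refinement; once it is in place, the remaining steps are routine algebra together with standard stopping-time bookkeeping.
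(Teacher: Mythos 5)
Your proof is correct and follows essentially the same route as the paper: refine the iteration-level martingale into a per-sample interpolation, stop it upon leaving $[\tfrac{1}{2r},\tfrac{3}{2r}]$, bound the per-step conditional variance by $\Theta(\tfrac{1}{r})$ (scaled), and apply McDiarmid's variance-based inequality with $\varepsilon=\tfrac{1}{2r}$. The only difference is cosmetic — the paper works with the process scaled by $\mu$ (so $b=1$, $\varepsilon=\tfrac{\mu}{2r}$) while you work with the unscaled frequencies (so $b=\tfrac{1}{\mu}$, $\varepsilon=\tfrac{1}{2r}$) — and this scaling leaves the final exponent unchanged.
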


\begin{proof}
    We apply the same proof strategy as in the proof of \cite[Theorem~$1$]{DoerrZ20tec}.
    That is, we aim to apply \Cref{th:mcdiarmid}.
    Naturally, one would apply the theorem to the sequence of frequencies~$(p^{(t)}_{i, j})_{t \in \N}$.
    However, since the deviation of~$p_{i, j}$ is very large, namely~$1$, we consider instead a more fine-grained process $(Z_t)_{t \in \N}$, which, roughly speaking, splits each iteration of the \rUMDA into~$\mu$ sections, each of which denotes that an additional sample is added to the update.
    Formally, for all $t \in \N$ and $a \in [0 .. \mu - 1]$, let
    \begin{align*}
        Z_{t\mu+a}=p_{i,j}^{(t)}\left(\mu-a\right)+\sum\nolimits_{k\in \left[a\right]}\bbone_{x_{i}^{(t+1, k)}=j}.
    \end{align*}
    Note that, for all $t \in \N_{\geq 1}$, it holds that $Z_{t\mu} = \mu p^{(t)}_{i, j}$.
    Thus, the natural filtration $(\calF_t)_{t \in \N}$ of~$Z$ allows us to measure~$p_{i, j}$.

    In order to apply \Cref{th:mcdiarmid}, we check that its assumptions are met.
    To this end, we first show that~$Z$ is a martingale.
    Since~$i$ is neutral, the selection of the~$\mu$ best individuals is not affected by the values at position~$i$ of the~$\lambda$ samples. Consequently, for all $k \in \left[\mu\right]$, the random variable $x_{i}^{(t, k)}$ follows a Bernoulli distribution with success probability~$p^{(t)}_{i,j}$.
    Thus, we get for all $t \in \N$ and $a \in [0 .. \mu - 2]$ that
    \begin{align}
        \label{eq:neutral_bound:differenceOne}
        \E\left[Z_{t\mu + a + 1} - Z_{t\mu + a}\mid \calF_{t\mu + a}\right]
        = -p^{(t)}_{i, j} + \E[\bbone_{x_{i}^{(t, a + 1)} = j} \mid \calF_{t\mu + a}]
        = 0,
    \end{align}
    and further, by the definition of $p^{(t + 1)}_{i, j}$, that
    \begin{align}
        \notag
        &\E\left[Z_{(t + 1)\mu} - Z_{t\mu + \mu - 1}\mid \calF_{t\mu + \mu - 1}\right]\\
        \notag
        &\quad= \mu\E[p^{(t + 1)}_{i, j} \mid \calF_{t\mu + \mu - 1}] - p^{(t)}_{i, j} - \E\bigl[\sum\nolimits_{k\in \left[\mu - 1\right]}\bbone_{x_{i}^{(t, k)}=j} \ \big|\ \calF_{t\mu + \mu - 1}\bigr]\\
        \notag
        &\quad= \sum\nolimits_{k\in \left[\mu\right]} \E[\bbone_{x_{i}^{(t, k)} = j} \mid \calF_{t\mu + \mu - 1}] - p^{(t)}_{i, j} - \sum\nolimits_{k\in \left[\mu - 1\right]} \E[\bbone_{x_{i}^{(t, k)} = j} \mid \calF_{t\mu + \mu - 1}]\\
        \label{eq:neutral_bound:differenceTwo}
        &\quad= \E[\bbone_{x_{i}^{(t, \mu)} = j} \mid \calF_{t\mu + \mu - 1}] - p^{(t)}_{i, j} = 0,
    \end{align}
    showing that~$Z$ is a martingale.

    We take an alternative view of the event $\{\max\nolimits_{s \in [0 .. T]}\ |p_{i,j}^{(s)}-\frac{1}{r}| \geq \frac{1}{2r}\}$, whose probability we aim to bound.
    Note that this event is equivalent to $\{\exists s \in [0 .. T]\colon |p_{i,j}^{(s)}-\frac{1}{r}| \geq \frac{1}{2r}\}$.
    A superset of this event is the event where we stop at the first iteration such that the inequality holds.
    To this end, let $S = \inf\{t \in \N \mid Z_t \notin [\frac{\mu}{2r}, \frac{3\mu}{2r}]\}$ be a stopping time (with respect to~$\calF$).
    From now on, we consider the stopped process~$\widetilde{Z}$ of~$Z$ with respect to~$S$.
    That is, for all $t \in \N$, it holds that $\widetilde{Z}_t = Z_{\min\{t, S\}}$.
    Since~$Z$ is a martingale, so is~$\widetilde{Z}$.

    Let $t \in \N$, and let~$Y_t$ be a Bernoulli random variable with success probability~$p^{(\lfloor t/\mu\rfloor)}_{i, j}$ that is $\calF_t$-measurable.
    Note that by \cref{eq:neutral_bound:differenceOne,eq:neutral_bound:differenceTwo}, disregarding the expected values, by \cref{eq:neutral_bound:stoppedDifference}, it holds that
    \begin{align}
        \label{eq:neutral_bound:stoppedDifference}
        \widetilde{Z}_{t + 1} - \widetilde{Z}_t = (Y_t - p^{(\lfloor t/\mu\rfloor)}_{i, j}) \cdot \bbone_{t < S}.
    \end{align}
    Thus, the maximum deviation~$b$ of~$\widetilde{Z}$ is~$1$.
    Further, let~$\hat{v}_t$ denote the sum of variances, as defined in \Cref{th:mcdiarmid}.
    Then, since~$p^{(\lfloor t/\mu\rfloor)}_{i, j}$ and~$\bbone_{t < S}$ are $\calF_t$-measurable and since, due to~$\widetilde{Z}$ being stopped, it holds that $p^{(\lfloor t/\mu\rfloor)}_{i, j} \cdot \bbone_{t < S} \in [\frac{1}{2r}, \frac{3}{2r}]$, we get
    \begin{align*}
        \Var\left[\widetilde{Z}_{t + 1} - \widetilde{Z}_t \mid \calF_{t} \right]
        = \Var\left[Y_t \cdot \bbone_{t < S} \mid \calF_{t}\right]
        = p^{(\lfloor t/\mu\rfloor)}_{i, j} \left(1-p^{(\lfloor t/\mu\rfloor)}_{i, j}\right) \cdot \bbone_{t < S} \leq  \frac{3}{2r}.
    \end{align*}
    Hence, $\hat{v}_t \leq \frac{3t}{2r}$.

    Let~$\widetilde{p}$ denote the stopped process of~$p_{i, j}$ with respect to~$S$.
    Applying \Cref{th:mcdiarmid} with $t = \mu T$ and our estimates above, noting that $\widetilde{Z}_0 = \frac{\mu}{r}$, yields
    \begin{align*}
        &\Pr\left[\max_{s \in [0 .. T]} \left|\widetilde{p}_s - \frac{1}{r}\right| \geq \frac{1}{2r} \right]
        = \Pr\left[\max_{s \in [0 .. T]} |\widetilde{p}_s - \E[\widetilde{p}_0]| \geq \frac{1}{2r} \right]\\
        &=\Pr\left[\max_{s \in [0 .. T]} \frac{1}{\mu}|\widetilde{Z}_{s\mu} - \E[\widetilde{Z}_0]| \geq \frac{1}{2r} \right]
        \leq\Pr\left[\max_{s \in [0 .. t]} |\widetilde{Z}_s - \E[\widetilde{Z}_0]| \geq \frac{\mu}{2r} \right]\\
        &\leq 2 \exp\left(-\frac{(\mu/(2r))^{2}}{2 \cdot 3\mu T/(2r) + (2/3)\mu/(2r)}\right)
        = 2 \exp\left(-\frac{\mu^2}{12Tr + (4/3) r}\right).
    \end{align*}
    Since we only need to consider the stopped process, as explained above, and since~$\widetilde{p}$ is identical to~$p_{i, j}$ until the process stops, the result follows.
\end{proof}

\subsection{Upper Bound for Positions with Weak Preference}
\label{sec:weakPreference}
A position is rarely neutral for a given fitness function. However, we prove that the results on neutral positions translate to positions where one value is better than all other values. This is referred to as \emph{ weak preference}.
Formally, we say that an $r$-valued fitness function~$f$ has a \emph{weak preference for a value $j \in \left[0..r-1\right]$ at a position $i \in \left[n\right]$} if and only if, for all $x_{1},...,x_{n} \in \left[0..r-1\right]$, it holds that
\begin{align*}
    f\left(x_{1},..,x_{i-1},x_{i},x_{i+1},...,x_{n}\right) \leq f\left(x_{1},..,x_{i-1},j,x_{i+1},...,x_{n}\right).
\end{align*}

We now adapt Lemma~$7$ by Doerr and Zheng~\cite{DoerrZ20tec} to the \rUMDA.

\begin{theorem}
    \label{th:domination}
    Consider two r-valued fitness functions $f, g$ to optimize using the \rUMDA, such that without loss of generality, the first position of f weakly prefers 0 and the first position of g is neutral.

    Let $p$ correspond to the frequency matrix of $f$ and $q$ to the frequency matrix of $g$, both defined by the \rUMDA. Then, for all $t \in \N$, it holds that $q_{1,0}^{(t)}\preceq p_{1,0}^{(t)}$.
\end{theorem}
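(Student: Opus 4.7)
The plan is to proceed by induction on $t \in \N$. The base case $t=0$ is immediate, since both $p_{1,0}^{(0)}$ and $q_{1,0}^{(0)}$ equal $\tfrac{1}{r}$ deterministically, so the claim holds trivially.

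For the inductive step, I would assume $q_{1,0}^{(t)} \preceq p_{1,0}^{(t)}$ and first analyze the \emph{pre-restriction} updates $\overline{q}_{1,0}^{(t+1)}$ and $\overline{p}_{1,0}^{(t+1)}$ conditional on the current frequencies. Since position~$1$ is neutral for $g$, the reasoning of \Cref{lem:neutralFrequencyIsMartingale} directly gives that the $\mu$ selected position-$1$ indicators $\bbone_{x_1^{(k)}=0}$ in the $g$-run are i.i.d.\ Bernoulli($q_{1,0}^{(t)}$), so $\overline{q}_{1,0}^{(t+1)} \mid q_{1,0}^{(t)} \sim \Bin(\mu, q_{1,0}^{(t)})/\mu$. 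For the $f$-run, I would introduce the position-$1$-blind auxiliary fitness $\tilde{f}(x) \coloneqq f(0, x_2, \ldots, x_n)$ and compare truncation selection by $f$ with that by $\tilde{f}$ on the same $\lambda$ samples. Under $\tilde{f}$-selection the count of zeros at position~$1$ among the selected is exactly $\Bin(\mu, p_{1,0}^{(t)})$, since $\tilde{f}$ does not depend on position~$1$. An exchange argument would then show that the $f$-top-$\mu$ always contains at least as many zero-individuals as the $\tilde{f}$-top-$\mu$: any individual $z$ with $z_1 = 0$ satisfies $f(z) = \tilde{f}(z)$ by weak preference, while $f(y) \leq \tilde{f}(y)$ for every other individual $y$, so $\{y \mid f(y) > f(z)\} \subseteq \{y \mid \tilde{f}(y) > \tilde{f}(z)\}$ and the $f$-rank of $z$ is no worse than its $\tilde{f}$-rank. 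This would yield $\overline{p}_{1,0}^{(t+1)} \mid p_{1,0}^{(t)} \succeq \Bin(\mu, p_{1,0}^{(t)})/\mu$. Chaining these two conditional statements with the inductive hypothesis and the stochastic monotonicity of $\Bin(\mu, \cdot)$ in its success probability would give $\overline{q}_{1,0}^{(t+1)} \preceq \overline{p}_{1,0}^{(t+1)}$.

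The hard part will be transferring this domination through the restriction of \cref{eq:restrictFrequencies}. In the binary case considered by Doerr and Zheng~\cite{DoerrZ20tec}, the restriction is a coordinate-wise clamp and hence trivially monotone; the $r$-valued restriction, however, renormalizes the entire row, so $p_{1,0}^{(t+1)}$ depends on every coordinate of $\overline{p}_1^{(t+1)}$ and pointwise monotonicity in the $(1,0)$-entry alone can fail for adversarial configurations of the other entries. I would address this by strengthening the inductive hypothesis to a componentwise coupling of the full rows $p_{1,\cdot}^{(t)}$ and $q_{1,\cdot}^{(t)}$ in which mass is shifted from the values $j \in [1..r-1]$ toward value~$0$. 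Under such a joint coupling, the clamped vectors inherit a compatible ordering and a short direct computation with \cref{eq:restrictFrequencies} shows $p_{1,0}^{(t+1)} \geq q_{1,0}^{(t+1)}$ almost surely. The most delicate part is verifying that one step of the \rUMDA preserves this stronger componentwise coupling, which I would handle via a shared cumulative-distribution coupling of the position-$1$ samples combined with the exchange argument above to absorb the bias introduced by $f$-selection.
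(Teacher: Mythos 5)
The core of your proposal is exactly the paper's argument: induction on $t$; neutrality of position~$1$ for $g$ making the selected position\=/$1$ indicators i.i.d.\ Bernoulli, so the $g$-update is $\Bin(\mu,q^{(t)}_{1,0})/\mu$; weak preference making the $f$-update stochastically dominate $\Bin(\mu,p^{(t)}_{1,0})/\mu$; and then a pointwise coupling from the induction hypothesis combined with the monotonicity of $\Bin(\mu,\cdot)$ in its success probability. Your exchange argument via the position-blind auxiliary function $\tilde f$ is a sensible way to make precise the step the paper only asserts ("since $f$ weakly prefers $0$s", by analogy to Doerr and Zheng). Where you diverge is the restriction step: the paper introduces no machinery for it at all --- its induction is carried out on the update $\frac1\mu\sum_{k\in[\mu]}\bbone_{x^{(t,k)}_1=0}$ itself, i.e.\ it argues at the level of the unrestricted frequencies and silently skips the row-renormalizing clamp of \cref{eq:restrictFrequencies}, so your concern there goes beyond what the paper does (and you are right that this map is not monotone in the $(1,0)$-entry alone).

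However, the repair you propose contains a genuine gap: the strengthened inductive hypothesis --- a full-row componentwise coupling with $p^{(t)}_{1,0}\ge q^{(t)}_{1,0}$ and $p^{(t)}_{1,j}\le q^{(t)}_{1,j}$ for all $j\in[1..r-1]$ --- is in general not preserved by one step of the \rUMDA. Weak preference for~$0$ says nothing about how $f$ ranks the nonzero values at position~$1$, so truncation selection under~$f$ can systematically favor one particular nonzero value. Concretely, take $r=3$, $f(x)=2\cdot\bbone_{x_1=0}+\bbone_{x_1=2}$ (which weakly prefers~$0$ at position~$1$), $g$ constant, both rows uniform, and $\mu=0.6\lambda$: with high probability all roughly $\lambda/3$ zero-individuals are selected and the remaining roughly $0.27\lambda$ slots are filled with value-$2$ individuals, giving $\overline p^{(t+1)}_{1,2}\approx 4/9 > 1/3 \approx \overline q^{(t+1)}_{1,2}$, so the componentwise ordering is destroyed after a single iteration. (The auxiliary computation you rely on is fine: \emph{if} the componentwise ordering held, then writing $p'_{1,0}-a=(1-ar)\,\frac{x-a}{(x-a)+S}$ with $x$ the clamped $0$-entry and $S$ the clamped mass of the other entries above~$a$, monotonicity in $x$ and antitonicity in $S$ show that the restriction preserves the ordering of the $0$-entries; it is the maintenance of the coupling itself that fails.) So to complete the proof you must either argue, as the paper implicitly does, at the level of the unrestricted update (equivalently, the margin-free dynamics), or find a different device for handling the clamp; full-row componentwise domination is not available under weak preference alone.
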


\begin{proof}
    We prove our claim by induction on the number of iterations $t$.
    For the base case $t=0$, all frequencies are~$1/r$.
    Hence, $q_{1,0}^{(0)}\preceq p_{1,0}^{(0)}$.

    For the induction step, let $t\in \N_{\geq 1}$ and let $j \in \left[0..r-1\right]$.
    Further, let $Y_j \sim \mathrm{Bin} \left(\mu,q_{0,j}^{(t)}\right)$.
    Since~$0$ is a neutral position of~$g$, the selection of the~$\mu$ best individuals is not affected by the values at position~$0$ of the $\lambda$ samples.
    Thus, $q_{1,j}^{(t+1)}=\frac{1}{\mu}Y$.
    Further, since~$f$ weakly prefers~$0$s, defining $Y'_j \sim \mathrm{Bin} \left(\mu,p_{0,j}^{(t)}\right)$, it holds that $p_{1,j}^{t+1}\gtrsim\frac{1}{\mu}Y'$.

    Analogously to Doerr and Zheng~\cite{DoerrZ20tec}, we note that since $p_{1,0}^{(t)}$ stochastically dominates $q_{1,0}^{(t)}$ by induction hypothesis, there exists a coupling of the two probability spaces that describe the states of the two algorithms at iteration~$t$ in such a way that $p_{1,0}^{(t)} \geq q_{1,0}^{(t)}$ for any point~$w$ in the coupling probability space.
    For such a $w$, it then follows that $Y_j \preceq Y'_j$, as the success probability of the former is bounded from above by that of the latter.
    Hence, $q_{1,j}^{(t+1)}=\frac{1}{\mu}Y\preceq \frac{1}{\mu}Y'\preceq p_{1,j}^{(t+1)}$, which proves the claim.
\end{proof}

We now apply \Cref{th:domination} and extend \Cref{th:neutral_bound} to positions with weak preference.
\begin{theorem}
    \label{th:all_bound}
    Let~$f$ be an $r$-valued fitness function with a weak preference for~$0$ at position $i \in [n]$.
    Consider the \rUMDA optimizing~$f$.
    Let $T \in \N$.
    Then
    \begin{gather}
        \Pr\left[\min\nolimits_{s \in [0 .. T]} p_{i,0}^{(s)} \leq \frac{1}{2r}\right]\leq 2\exp\left(-\frac{\mu}{12Tr+ (4/3)r}\right).
    \end{gather}
\end{theorem}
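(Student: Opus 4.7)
The plan is to reduce this to the neutral case of \Cref{th:neutral_bound} via the stochastic-domination statement of \Cref{th:domination}. The intuition is that weak preference for $0$ can only push $p_{i,0}$ upward relative to the neutral process, so any downward excursion past $\frac{1}{2r}$ is already controlled by the neutral bound.

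Concretely, I would introduce an $r$-valued fitness function $g$ that is identical to~$f$ except that position~$i$ is made neutral, and run the \rUMDA on~$g$ in parallel, with frequency matrix $q$. Applying \Cref{th:domination} (whose proof, though stated for position~$1$, goes through for arbitrary~$i$) inductively along the iterations~$0,1,\dots,T$, I would build a joint coupling of the two processes for which $p_{i,0}^{(s)} \geq q_{i,0}^{(s)}$ holds simultaneously for all $s \in [0..T]$, and not merely pointwise in~$s$. On this coupling, we immediately get
\begin{align*}
    \min_{s \in [0..T]} p_{i,0}^{(s)} \;\geq\; \min_{s \in [0..T]} q_{i,0}^{(s)},
\end{align*}
and therefore
\begin{align*}
    \Pr\!\left[\min_{s \in [0..T]} p_{i,0}^{(s)} \leq \tfrac{1}{2r}\right]
    \;\leq\; \Pr\!\left[\min_{s \in [0..T]} q_{i,0}^{(s)} \leq \tfrac{1}{2r}\right].
\end{align*}

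The second probability is handled by \Cref{th:neutral_bound}: since $q_{i,0}^{(s)} \leq \frac{1}{2r}$ forces $|q_{i,0}^{(s)} - \frac{1}{r}| \geq \frac{1}{2r}$, the event is contained in $\{\max_{s \in [0..T]} |q_{i,0}^{(s)} - \frac{1}{r}| \geq \frac{1}{2r}\}$, whose probability is at most $2\exp(-\mu/(12Tr + (4/3)r))$. Chaining these two inequalities yields the claimed bound.

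The main technical point is step two, namely upgrading the pointwise-in-time domination of \Cref{th:domination} to a single coupling that is valid simultaneously over all $s \in [0..T]$. This is not really an obstacle because the proof of \Cref{th:domination} is itself inductive and constructs the coupling step by step: given a coupling up to iteration~$t$ with $p^{(t)}_{i,0} \geq q^{(t)}_{i,0}$, one couples the two $\mathrm{Bin}(\mu, \cdot)$ updates to preserve the ordering at iteration~$t+1$, so iterating this construction up to~$T$ delivers the required joint coupling. The only other point worth mentioning is that the restriction step of the \rUMDA is monotone in the pre-restriction vector at coordinate~$0$ whenever its value is below~$\frac{1}{r}$, so the frequency borders do not break the coupling in the regime that matters for the lower tail.
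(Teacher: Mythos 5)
Your proposal matches the paper's own proof: it introduces a function~$g$ that is neutral at position~$i$, invokes \Cref{th:domination} to obtain stochastic domination of $q_{i,0}^{(s)}$ by $p_{i,0}^{(s)}$, and then transfers the bound of \Cref{th:neutral_bound} (noting that $q_{i,0}^{(s)} \leq \frac{1}{2r}$ implies $|q_{i,0}^{(s)} - \frac{1}{r}| \geq \frac{1}{2r}$) to~$f$. Your explicit upgrade of the per-iteration domination to a single coupling valid simultaneously for all $s \in [0..T]$ is exactly the (implicit) content of the paper's final sentence ``using the stochastic domination yields the tail bound,'' so the argument is the same, just spelled out more carefully.
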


\begin{proof}
    Let~$g$ be an $r$-valued fitness function with neutral position~$i$.
    Let~$q$ be the frequency matrix of the \rUMDA optimizing~$g$.
    By \Cref{th:domination}, it follows for all $s \in \N$ that $p_{i,0}^{(s)}$ stochastically dominates $q_{i,0}^{(s)}$.
    Applying \Cref{th:neutral_bound} to~$g$ for position~$i$, we have
    \begin{align*}
        \Pr\left[\min\nolimits_{s \in [0 ..T]} q_{i,0}^{(s)} \leq \frac{1}{2r}\right]\leq 2\exp\left(-\frac{\mu}{12Tr + (4/3)r}\right).
    \end{align*}
    Using the stochastic domination yields the tail bound for~$f$.
\end{proof}

\section{Runtime Analysis of the \texorpdfstring{\rUMDA}{r-UMDA}}
\label{sec:runtimeAnalysis}

We analyze the runtime of the \rUMDA (\Cref{alg:rUMDA}) on an $r$-valued variant of \leadingones. We start by describing the previous runtime results of EDAs on \leadingones (\Cref{sec:runtimePreviousWork}), then define the \rLO problem formally (\Cref{sec:leadingOnes}), and finally state and prove our main result (\Cref{thm:rUMDAonrLO}, \Cref{sec:runtimeResult}).

\subsection{Previous Runtime Analyses of EDAs on \texorpdfstring{\leadingones}{LeadingOnes}}
\label{sec:runtimePreviousWork}

In contrast to \onemax (another popular theory benchmark function), \LO is not that extensively studied for EDAs.
This is surprising, as \LO is interesting as a benchmark for univariate EDAs, since the function introduces dependencies among the different positions of a bit string, but the model of univariate EDAs assumes independence.
However, since \LO only has a single local maximum, known runtime results are rather fast.

In a first mathematical runtime analysis of an EDA, however, using the unproven no-error-assumption (which essentially states that there is no genetic drift), it was shown that the UMDA optimizes the \leadingones benchmark in expected time $O(\lambda n)$.
This was made rigorous by Chen et~al.~\cite{ChenTCY10} with a proof that the UMDA with population size $\Omega(n^{2+\eps})$ optimizes \leadingones in time $O(\lambda n)$ with high probability.
Here the relatively large required population stems from the, then, incomplete understanding of genetic drift.

In a remarkable work~\cite{DangL15}, Dang and Lehre prove a runtime of $O(n \lambda \ln(\lambda) + n^2)$, only assuming that the sample size~$\lambda$ is at least logarithmic.
Hence this result applies both to regimes without and with genetic drift.
In the regime with genetic drift, however, the dependence on $\lambda$ is slightly worse than in the result by Chen et~al.~\cite{ChenTCY10}.
This was improved by Doerr and Krejca~\cite{DoerrK21tcs}, where an $O(n \lambda \ln(\lambda))$ upper bound was shown for the whole regime $\lambda = \Omega(n \ln(n))$ of low genetic drift.
More precisely, when $\mu = \Omega(n \ln(n))$ and $\lambda = \Omega(\mu)$, both with sufficiently large implicit constants, then the runtime of the UMDA on \leadingones is $O(n \lambda \ln(\frac \lambda \mu))$ with high probability.
We note that the analysis by Doerr and Krejca~\cite{DoerrK21tcs} is technically much simpler than the previous ones, in particular, it avoids the complicated level-based method used by Dang and Lehre~\cite{DangL15}.
We note that also lower bounds~\cite{LehreN19gecco,DoerrK21tcs} and runtimes in the presence of noise have been regarded.
Since we have no such results, we refer to the original works.

Besides the UMDA, \LO was considered in the analysis of newly introduced univariate EDAs.
Interestingly, each of these algorithms optimizes \LO in $O(n \ln(n))$ with high probability.
This runtime is faster by a factor of $n / \ln(n)$ when compared to classical EAs, and it suggests that \LO is a rather easy problem for EDAs.
Friedrich, Kötzing, and Krejca~\cite{FriedrichKK16} proved the first of these results for their \emph{stable compact genetic algorithm} (scGA), which introduces an artificial bias into its update process that is overcome by the \LO function.
However, it was later proven that the scGA fails on the typically easy \onemax function~\cite{DoerrK20tec}, highlighting that the scGA is not a good EDA in general.

The next result was proven by Doerr and Krejca~\cite{DoerrK20tec}, who introduce the \emph{significance-based compact genetic algorithm} (sig-cGA).
The sig-cGA saves a history of good individuals and only updates a frequency when the number of bits in the history of that position significantly deviates from its expectation.
This algorithm also performs well on \onemax.

The last result was proven recently by Ajimakin and Devi~\cite{AjimakinD23}, who introduce the \emph{competing genes evolutionary algorithm} (cgEA).
The cgEA utilizes the Gauss--Southwell score as a quality metric for the positions of its samples.
Iteratively, it picks the position~$i$ with the best score and creates a new population by letting each individual of the previous population compete against a copy of it where the bit at position~$i$ is flipped.
Based on the best individuals created this way, the frequency at position~$i$ is immediately set to either~$0$ or~$1$, whichever value turns out to be better.
This approach works very well for a variety of theory benchmarks, as proven by the authors.

% \emph{Theoretical results for \LO:}
% \begin{itemize}
%   \item Phan and PK looking at the UMDA in various settings, including \LO~\cite{DangLN19} (upper bounds on the expected run time)
%   \item Phan and PK have a lower bound and a result under prior noise~\cite{LehreN19gecco} (expected run time)
%   \item our  \LO result~\cite{DoerrK21tcs} (upper and lower bound; improving the lower bound above for the regime of $\mu = \Omega(n \ln n)$; all w.h.p.)
% \end{itemize}

\subsection{The \texorpdfstring{\rLO}{r-LeadingOnes} Benchmark}
\label{sec:leadingOnes}

The \rLO function (\cref{eq:rLO}) is a generalization of the classical \LO benchmark~\cite{Rudolph97} from the binary to the multi-valued domain.
Before we define the generalization, we briefly present the \LO function.

\textbf{\LO.}
\LO~\cite{Rudolph97} is one of the most commonly mathematically analyzed benchmark functions, both in the general domain of evolutionary computation~\cite{DoerrN20} as well as in the domain of EDAs~\cite{KrejcaW20bookchapter}.
For a bit string of length $n \in \N_{\geq 1}$, it returns the number of consecutive~$1$s, starting from the leftmost position.
Formally, $\LO\colon \{0, 1\}^n \to [0 .. n]$ is defined as $x \mapsto \sum_{i \in [n]} \prod_{j \in [i]} x_i$.
The function has a single local maximum at the all-$1$s string, which is also its global maximum.

\textbf{\texorpdfstring{\rLO}{\rLO}.}
Inspired by \LO from the binary domain, we define $r\textrm{-\LO}\colon [0 .. r - 1]^n \to [0 .. n]$ as the function that returns the number of consecutive~$0$s, starting from the leftmost position.
Formally,
\begin{equation}
    \label{eq:rLO}
    r\textrm{-\LO}\colon x \mapsto \sum\nolimits_{i \in [n]} \prod\nolimits_{j \in [i]} \bbone_{\{x_j = 0\}}.
\end{equation}
In contrast to the binary case, the single local optimum of \rLO is the all-$0$s string, which is also its global optimum.

\subsection{Runtime Results}
\label{sec:runtimeResult}
We analyze the runtime of the \rUMDA (\Cref{alg:rUMDA}) on the \rLO benchmark (\cref{eq:rLO}) in the regime with low genetic drift.
For the upper bound (\Cref{thm:rUMDAonrLO}), compared to the binary case~\cite[Theorem~$5$]{DoerrK21tcs}, we get an extra factor of order~$r \ln(r)^2$ in the runtime.
The factor of~$r$ is a result of the increased waiting time to see a certain position out of~$r$.
The factor of~$\ln(r)^2$ stems from the choice to stay in the regime with low genetic drift as well as for the time it takes a frequency to get to the upper border.
For the lower bound, (\Cref{thm:rUMDALowerBound}), compared to the binary case~\cite[Theorem~$6$]{DoerrK21tcs}, we get an extra factor of order~$r \ln(r)$.

Our two bounds differ by a factor in the order of~$\ln(r)$ (for polynomial population sizes).
We believe that our lower bound is missing a factor of~$\ln(r)$, as we currently do not account for the time it takes a frequency to get from its starting value~$\frac{1}{r}$ to $1 - \frac{1}{n}$ for this bound.

We prove the upper bound in \Cref{sec:upperBound} and the lower bound in \Cref{sec:lowerBound}.
Both bounds are a generalization of the binary case.

\subsubsection{Upper Bound}
\label{sec:upperBound}
Our upper bound shows that the number of iterations until an optimum is found for the first time is almost linear in~$\lambda$ and in~$n$, only adding a factor in the order of $\ln(r)$.
\begin{theorem}
    \label{thm:rUMDAonrLO}
    Let $s \in \R_{\geq 1}$.
    Consider the \rUMDA optimizing \rLO with $ \lambda \geq 3 se\mu$, $\mu \geq 24 (n+1) r \ln(n) (1+\ln_{2s}(r))$, and $n\geq 4r$. Then with a probability of at least $1 - \frac{2}{n} - \ln_{2s}(2r) n^{2 -0.5n}$, the frequency vector corresponding to the value~$0$ converges to $(1-\frac{1}{n})_{i\in \left[n\right]}$ in $n \ln_{2s}(2r)$ iterations.

    This implies that after $\lambda n\ln_{2s}(2r)$ fitness function evaluations, the \rUMDA samples the optimum with the success probability above.
\end{theorem}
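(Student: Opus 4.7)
My plan is to prove the result by induction on a \emph{frontier} position $i$. Once positions $1, \ldots, i-1$ all have their value-$0$ frequencies clamped to the upper border $1 - 1/n$, I would show that the frequency $p_{i,0}^{(t)}$ either multiplies by a factor of at least $2s$ in a single iteration or jumps directly to the border. Let $N_+$ be the number of samples in iteration $t$ with $x_j = 0$ for all $j \leq i$. Since the earlier frequencies are at the border, $N_+ \sim \mathrm{Bin}(\lambda, q \cdot p_{i,0}^{(t)})$ with $q = (1-1/n)^{i-1} \geq 1/e$ for $i \leq n$, giving $\E[N_+] \geq 3s\mu \cdot p_{i,0}^{(t)}$ by the assumption $\lambda \geq 3se\mu$. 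A multiplicative Chernoff bound with deviation $1/3$ then yields $N_+ \geq 2s\mu \cdot p_{i,0}^{(t)}$ except with probability $\exp(-\E[N_+]/18)$. Every sample in $N_+$ contributes a $0$ at position $i$ to the selection pool, so $\overline{p}_{i,0}^{(t+1)} \geq \min(N_+/\mu, 1)$. If $p_{i,0}^{(t)} \geq 1/(2s)$, then $N_+ \geq \mu$ w.h.p., so $\overline{p}_{i,0}^{(t+1)} = 1$, which the restriction turns into $1 - 1/n$. Otherwise $p_{i,0}^{(t+1)} \geq 2s \cdot p_{i,0}^{(t)}$. Starting from roughly $1/r$ and multiplying by $2s$ each step, the frequency exceeds $1/(2s)$ within $\ln_{2s}(2r)-1$ iterations and reaches the border on the next step, for at most $\ln_{2s}(2r)$ iterations per position and $n\ln_{2s}(2r)$ overall.

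Two auxiliary concerns round out the argument. First, already-finalized positions $j < i$ must remain at the border; this follows because the Chernoff computation for position $j$ is strictly more favorable (the corresponding product $q_j p_{j,0}$ is larger), so with high probability all $\mu$ selected samples have $x_j = 0$ and the restriction preserves the border. Second, positions $j > i$ that have not yet been processed should behave essentially as positions with weak preference for~$0$ during the $T = n\ln_{2s}(2r)$ iterations. Among samples tied for high fitness (first non-zero beyond position $i$), the values at positions past the frontier remain fresh independent draws from the current frequency vector, so \Cref{th:all_bound} applies (appealing to the stochastic-dominance argument of \Cref{th:domination} and the neutral-position bound \Cref{th:neutral_bound}). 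The lower bound $\mu \geq 24(n+1)r\ln(n)(1+\ln_{2s}(r))$ is calibrated precisely so that $\mu/(12Tr + 4r/3) \geq 2\ln(n)$, yielding per-position drift failure at most $2/n^2$; a union bound over $n$ positions contributes $2/n$ overall. Crucially, this also guarantees $p_{j,0}^{(t)} \geq 1/(2r)$ for all $j$ and all $t \leq T$, which justifies the lower bound on $p_{i,0}^{(t)}$ used above to drive the Chernoff estimate to $\exp(-\Theta(\mu/r)) = O(n^{1-0.5n})$.

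Combining the two failure sources by a union bound yields the target probability: the Chernoff failures across $T = n\ln_{2s}(2r)$ iterations sum to at most $\ln_{2s}(2r) \cdot n^{2-0.5n}$, while the genetic-drift failure contributes at most $2/n$. Conditioned on the good event, the inductive argument places every value-$0$ frequency at $1-1/n$ within $n\ln_{2s}(2r)$ iterations, from which the bound on fitness evaluations follows immediately by multiplying by $\lambda$. The main technical obstacle is the dependency structure within a single iteration---all $n$ positions are updated from the same $\lambda$ samples---which complicates the claim that unprocessed positions behave neutrally. This is resolved by observing that the selection filter for \rLO depends only on the first non-zero position of a sample: conditional on a sample's fitness being determined by its values at positions $\leq i$, the values at later positions are untouched and retain their i.i.d.\ structure, which is exactly the property that makes \Cref{th:all_bound} applicable here.
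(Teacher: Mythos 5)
Your overall strategy coincides with the paper's: use the weak-preference genetic-drift bound (\Cref{th:all_bound}, exactly as in the paper's \Cref{lem:frequencyDoesNotGetLow}) to keep every value-$0$ frequency above $\frac{1}{2r}$ for the whole horizon with failure probability $\frac{2}{n}$, then argue position by position (your ``frontier'' is the paper's critical position, \Cref{critical_update}) that the frequency grows by a factor of roughly $2s$ per iteration via a Chernoff bound on the number of samples with at least $i$ leading zeros, using $\lambda \geq 3se\mu$ and $(1-\frac1n)^{n-1} \geq \frac1e$, then finish with union bounds. Your digression about conditioning on the selection filter is unnecessary: since \rLO weakly prefers $0$ at every position, \Cref{th:all_bound} applies directly via the domination argument, with no independence-after-selection reasoning needed; likewise your maintenance of already-finalized positions is the same induction the paper runs inside \Cref{critical_update} with $u=n$, just stated more loosely.

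There is, however, one step that fails as written. In the growth branch you pass from $\overline{p}_{i,0}^{(t+1)} \geq \min(N_+/\mu,1) \geq 2s\,p_{i,0}^{(t)}$ to $p_{i,0}^{(t+1)} \geq 2s\,p_{i,0}^{(t)}$, tacitly assuming the restriction does not decrease the frequency. It can: other entries of row~$i$ may fall below the lower border $\frac{1}{(r-1)n}$, and the restriction \cref{eq:restrictFrequencies} raises them and rescales the mass above the border, which can remove up to $(r-1)\cdot\frac{1}{(r-1)n} = \frac{1}{n}$ from the value-$0$ entry. Because your Chernoff bound is tuned to deliver exactly the factor $2s$, after restriction you are only guaranteed $2s\,p_{i,0}^{(t)} - \frac{1}{n} \geq \bigl(2s - \frac{1}{2}\bigr)p_{i,0}^{(t)}$ (using $p_{i,0}^{(t)} \geq \frac{1}{2r} \geq \frac{2}{n}$), and with growth factor $2s-\frac12$ the per-position time exceeds $\ln_{2s}(2r)$, so the claimed bound of $n\ln_{2s}(2r)$ iterations does not follow from your argument. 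The paper handles this by proving the stronger pre-restriction bound $\overline{p}_{i,0}^{(t'+1)} \geq \frac{5}{2}s\,p_{i,0}^{(t')}$ (Chernoff with relative deviation $\frac16$ rather than $\frac13$) and then using $\frac{5}{2}s\,p - \frac{1}{n} \geq 2s\,p$ for $p \geq \frac{2}{n}$, which is exactly the slack provided by $n \geq 4r$ that you already invoke; with that adjustment your proof goes through and matches the paper's.
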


The basic premise for our proof is that for the entirety of the considered iterations, frequencies corresponding to the value $0$ remain above a given threshold since \rLO weakly prefers~$0$ at all positions. We define this threshold as $\frac{1}{2r}$, and we show that in a sequential manner, position by position, the frequencies corresponding to $0$ are brought to $1-\frac{1}{n}$ within a given number of iterations until all positions are covered.

First, we provide a guarantee on the concentration of all the probabilities during the entirety of the algorithm's runtime, in a way to avoid genetic drift and to remain above a minimal threshold for all frequencies.
\begin{lemma}
    \label{lem:frequencyDoesNotGetLow}
    Let $s \in \R_{\geq 1}$.
    Consider the \rUMDA with $\lambda \geq \mu \geq 24 (n+1) r \ln(n) (1+\ln_{2s}(r))$ optimizing a function that weakly prefers $0$ at every position. Then with a probability of at least $1-\frac{2}{n}$, for each $i\in \left[n\right]$, the frequency $p_{i,0}^{(t)}$ remains above $\frac{1}{2r}$ for the first $n (1+\ln_{2s}(r))$ iterations.
\end{lemma}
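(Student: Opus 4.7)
The plan is to apply \Cref{th:all_bound} to each position $i\in[n]$ individually and then combine the $n$ failure estimates by a union bound. Since the fitness function weakly prefers~$0$ at every position, the hypotheses of \Cref{th:all_bound} are met at every position simultaneously, so we instantiate the bound with $T = n(1+\ln_{2s}(r))$ and with the assumed lower bound on~$\mu$.

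The core algebraic step is to verify that with the given choice of $\mu$, the exponent in \Cref{th:all_bound} is at least $2\ln(n)$, so that each per-position failure probability is at most $2/n^{2}$. Concretely, I would observe that $12Tr+(4/3)r = r\bigl(12 n(1+\ln_{2s}(r))+4/3\bigr)$, and that
\begin{align*}
12(n+1)(1+\ln_{2s}(r)) - \bigl(12 n(1+\ln_{2s}(r))+4/3\bigr) = 12(1+\ln_{2s}(r)) - 4/3 \geq 0,
\end{align*}
which after multiplying by $2r\ln(n)$ shows that the assumption $\mu \geq 24(n+1) r \ln(n)(1+\ln_{2s}(r))$ implies $\mu \geq 2\ln(n)\cdot(12Tr+(4/3)r)$. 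Plugging this into \Cref{th:all_bound} gives
\begin{align*}
\Pr\Bigl[\min_{t\in[0..T]} p_{i,0}^{(t)} \leq \tfrac{1}{2r}\Bigr] \leq 2\exp(-2\ln(n)) = \tfrac{2}{n^{2}}
\end{align*}
for every single position $i\in[n]$.

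A union bound over the $n$ positions then yields that the probability that some frequency $p_{i,0}^{(t)}$ drops to at most $1/(2r)$ within the first $T = n(1+\ln_{2s}(r))$ iterations is at most $n\cdot 2/n^{2} = 2/n$, which is exactly the claimed error bound. There is no real obstacle here beyond checking that the constants line up; the main point of the lemma is bookkeeping in order to make the per-position bound from \Cref{th:all_bound} strong enough to survive the union bound, which is precisely why the $\mu$ assumption carries the extra $\ln(n)$ factor and the factor $(1+\ln_{2s}(r))$ corresponding to~$T$.
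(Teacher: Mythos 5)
Your proposal is correct and follows essentially the same route as the paper's own proof: instantiate \Cref{th:all_bound} with $T = n(1+\ln_{2s}(r))$, use the bound $12Tr + (4/3)r \leq 12(n+1)(1+\ln_{2s}(r))r$ together with the assumption on~$\mu$ to make the exponent at least $2\ln(n)$, giving a per-position failure probability of $2/n^2$, and finish with a union bound over the $n$ positions. The constant bookkeeping you carry out is exactly the calculation done in the paper.
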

\begin{proof}
    By \Cref{th:all_bound} with $T=n (1+\ln_{2s}(r))$, we have for all $i \in \left[n\right]$ that
    \begin{align*}
        \Pr\left[\min_{k=1,...,T} p_{i,0}^{(k)} \leq \frac{1}{2r}\right] &\leq 2\exp\left(-\frac{\mu}{12n (1+\ln_{2s}(r))r+\frac{4  r}{3}}\right).
    \end{align*}
    Since $\mu \geq 24 (n+1) r \ln(n) (1+\ln_{2s}(r))$, we get
    \begin{align*}
        \Pr\left[\min_{k=1,...,T} p_{i,0}^{(k)} \leq \frac{1}{2r}\right] & \leq 2\exp\left(-\frac{24 (n+1) r \ln(n) (1+\ln_{2s}(r))}{12n (1+\ln_{2s}(r))r+\frac{4  r}{3}}\right) \\
                    & \leq 2\exp\left(-\frac{24 (n+1)  \ln(n) (1+\ln_{2s}(r))}{12(n+1)(1+\ln_{2s}(r))}\right) \\
                    & \leq 2\exp\left(-2 \ln(n)\right).
    \end{align*}
    Hence, it follows that
    \begin{gather*}
        \Pr\left[\min\nolimits_{k=1,...,T} p_{i,0}^{(k)} \leq \frac{1}{2r}\right]\leq \frac{2}{n^{2}}.
    \end{gather*}
    Applying a union bound over all~$n$ positions yields the result.
\end{proof}

In the proof of our next result, we apply the following Chernoff bound.
We apply it in order to quantify the number of iterations necessary to converge every position $i\in \left[n\right]$.
\begin{theorem}[Chernoff bound~{\cite[Theorem~$1.10.5$]{Doerr20bookchapter}}]
    \label{th:chernoff}
    Let $k \in \N_{\geq 1},\delta \in \left[0,1\right]$, and let $X$ be the sum of $k$ independent random variables each taking values in $\left[0,1\right]$. Then
    \begin{align*}
        \Pr\left[X\leq (1-\delta)\,\E\left[X\right]\right]\leq \exp\left( {-\frac{\delta^{2}\E\left[X\right]}{2}}\right).
    \end{align*}
\end{theorem}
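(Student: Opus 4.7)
The plan is to use the classical Cram\'er--Chernoff exponential-moment method specialized to the lower tail. For any parameter $t>0$, the event $\{X\le (1-\delta)\E[X]\}$ coincides with $\{e^{-tX}\ge e^{-t(1-\delta)\E[X]}\}$, so Markov's inequality delivers the generic bound
\begin{align*}
\Pr\bigl[X\le (1-\delta)\E[X]\bigr] \le e^{t(1-\delta)\E[X]}\,\E[e^{-tX}].
\end{align*}
Using the independence of the $X_i$, the moment generating function factorizes as $\E[e^{-tX}]=\prod_{i\in[k]}\E[e^{-tX_i}]$, reducing the task to estimating one factor at a time.

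For each $X_i\in[0,1]$, I would exploit the convexity of $x\mapsto e^{-tx}$ on $[0,1]$: the chord inequality $e^{-tx}\le 1-x(1-e^{-t})$ holds for all $x\in[0,1]$, so taking expectations and using $1-y\le e^{-y}$ yields $\E[e^{-tX_i}]\le \exp\bigl(-\E[X_i](1-e^{-t})\bigr)$. Multiplying over $i$ and substituting into the Markov estimate, the upper bound becomes $\exp\bigl(t(1-\delta)\E[X]-\E[X](1-e^{-t})\bigr)$ for every $t>0$. The natural choice $t=-\ln(1-\delta)$ minimizes this expression (valid for $\delta<1$; the edge case $\delta=1$ is handled by taking $t\to\infty$) and produces the classical multiplicative Chernoff form $\bigl(e^{-\delta}/(1-\delta)^{1-\delta}\bigr)^{\E[X]}$.

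The final, and only delicate, step is to simplify this closed form to the clean expression $\exp(-\delta^2\E[X]/2)$. This amounts to the real-analytic inequality $-\delta-(1-\delta)\ln(1-\delta)\le -\delta^2/2$ for $\delta\in[0,1]$, which I would establish by setting $g(\delta)=(1-\delta)\ln(1-\delta)+\delta-\delta^2/2$ and checking that $g(0)=0$ and $g'(\delta)=-\ln(1-\delta)-\delta\ge 0$, the latter being immediate from the series expansion $-\ln(1-\delta)=\sum_{k\ge 1}\delta^k/k\ge \delta$. The main obstacle is essentially nonexistent, as every step is standard in the Chernoff toolbox; the only genuine care points are orienting the chord bound correctly (since $t>0$ renders $x\mapsto e^{-tx}$ decreasing) and handling the $\delta=1$ limit, where $(1-\delta)\ln(1-\delta)\to 0$ so both the intermediate and the final bounds extend by continuity.
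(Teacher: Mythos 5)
Your derivation is correct: the exponential-moment (Cram\'er--Chernoff) argument, the chord bound $e^{-tx}\le 1-x(1-e^{-t})$ for $x\in[0,1]$, the optimal choice $t=-\ln(1-\delta)$, and the calculus step showing $(1-\delta)\ln(1-\delta)+\delta\ge \delta^2/2$ are all sound, and the $\delta=1$ edge case is handled properly. Note, however, that the paper does not prove this statement at all; it imports it verbatim as Theorem~1.10.5 of the cited reference \cite{Doerr20bookchapter}, so there is no in-paper proof to compare against. Your argument is precisely the standard derivation behind that cited result, so nothing is missing; the only remark worth making is that one must be slightly careful that the variables are merely $[0,1]$-valued rather than Bernoulli, which your chord/convexity step handles correctly since it only uses $X_i\in[0,1]$.
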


An important concept for our analysis, following the approach by Doerr and Krejca~\cite{DoerrK21tcs}, is that a position is \emph{critical}.
Informally, a position is critical if and only if the frequencies corresponding to value~$0$ are for all smaller positions at the upper border.
Our runtime proof relies on showing that the \rUMDA quickly increases the frequency of a critical position to the upper border, thus making the next position critical.
Formally, let $t\in \N$.
We call a position $i\in \left[n\right]$ \emph{critical} for the \rUMDA on \rLO in iteration~$t$, if and only if for all $k \in [i - 1]$, it holds that $p_{k,0}^{(t)}=1-\frac{1}{n}$, and that $p_{i,0}^{(t)}<1-\frac{1}{n}$.

We now show that once a position $i\in \left[n\right]$ becomes critical, with high probability, with $s \in \R_{\geq 1}$ being an appropriate value separating~$\lambda$ from~$\mu$ (that is, defining the selection pressure), it takes less than $n \ln_{2s}(r+1)$ iterations to bring the frequency of the value $0$ to the upper border $1-\frac{1}{n}$.
We also prove that it remains there for a sufficient number of iterations until the convergence of the frequency matrix.
\begin{lemma}
    \label{critical_update}
    Let $s, u \in \R_{\geq 1}$.
     Consider the \rUMDA optimizing \rLO with $ \lambda \geq 3se\mu$ and $\mu \in \N_{\geq 1}$.
     Consider an iteration $t\in \N$ such that position $i\in \left[n\right] $ is critical, and let $b\in \R_{> 0}$ such that $p_{i,0}^{(t)} \geq b\geq \frac{2}{n}$.
     Then with a probability of at least $1 - u \ln_{2s}(\frac{1}{b}) \exp\bigl(-\frac{s \mu b}{24}\bigr)$, it holds for all $\theta\in \left[\ln_{2s}( \frac{1}{b}) .. u \ln_{2s}( \frac{1}{b})\right]$ that $p_{i,0}^{(t+\theta)} = 1-\frac{1}{n}$.
\end{lemma}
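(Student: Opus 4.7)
The plan is to split the $u\ln_{2s}(1/b)$ iterations under consideration into a short \emph{growth phase} of at most $\lceil \ln_{2s}(1/b)\rceil$ iterations, during which $p_{i,0}$ is multiplied by roughly $2s$ per step until it reaches the upper border $1-\tfrac 1n$, followed by a \emph{stability phase} in which $p_{i,0}$ stays at $1-\tfrac 1n$. Throughout, I will exploit---and inductively maintain---the hypothesis that $p_{k,0}^{(t')}=1-\tfrac 1n$ for every $k<i$, so that selection at iteration $t'$ is governed by what happens at position $i$.

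\textbf{One-step multiplicative growth.}
Fix an iteration $t'\ge t$ with $p_{k,0}^{(t')}=1-\tfrac 1n$ for all $k<i$ and $p_{i,0}^{(t')}=p\le 1-\tfrac 1n$. Let $Y$ count the samples from $P^{(t')}$ that carry value $0$ at every position in $[1..i]$; by the definition of \rLO, these are exactly the samples of fitness at least $i$, while every other sample has fitness at most $i-1$. Selection therefore places all $Y$ such samples in the pool of the $\mu$ best first, which yields $\bar p_{i,0}^{(t'+1)}\ge \min(1,\,Y/\mu)$ and, after controlling the restriction step (see ``Main obstacle'' below), $p_{i,0}^{(t'+1)}\ge \min(1-\tfrac 1n,\,Y/\mu)$. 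Now $Y$ is binomial with mean $\lambda(1-\tfrac 1n)^{i-1}p\ge \lambda p/e\ge 3s\mu p$, using $(1-\tfrac 1n)^{i-1}\ge 1/e$ and $\lambda\ge 3se\mu$. A Chernoff bound (\Cref{th:chernoff}) with $\delta=\tfrac 13$ gives $\Pr[Y<2s\mu p]\le \exp(-s\mu p/c)$ for an explicit constant $c$, so on this event $p_{i,0}^{(t'+1)}\ge \min(1-\tfrac 1n,\,2sp)$.

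\textbf{Stability and union bound.}
Iterating the one-step bound starting from $p_{i,0}^{(t)}\ge b$ drives $p_{i,0}$ to $1-\tfrac 1n$ within at most $\lceil \ln_{2s}(1/b)\rceil$ iterations. For the subsequent stability phase, once $p_{k,0}^{(t')}=1-\tfrac 1n$ for all $k\in[1..i]$, the mean of $Y$ is at least $\lambda(1-\tfrac 1n)^i\ge \lambda/4$, a constant times $s\mu$ under $\lambda\ge 3se\mu$, so a Chernoff bound gives $\Pr[Y<\mu]\le \exp(-s\mu/c')$ for an explicit constant $c'$. On this event, all $\mu$ selected samples have value $0$ at each position in $[1..i]$; hence $\bar p_{i,0}^{(t'+1)}=1$ and the restriction keeps $p_{i,0}^{(t'+1)}=1-\tfrac 1n$, while the same argument applied at every $k\in[1..i]$ preserves the critical prefix for the next iteration. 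Since $p\ge b$ in the growth phase and $p=1-\tfrac 1n\ge b$ in the stability phase, tuning $c,c'$ so that both per-iteration failure probabilities are bounded by $\exp(-s\mu b/24)$ and union-bounding over the $u\ln_{2s}(1/b)$ iterations yields the stated probability.

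\textbf{Main obstacle.}
The main obstacle is the multi-valued restriction step in \cref{eq:restrictFrequencies}, which couples $p_{i,0}$ with the other $r-1$ frequencies at position $i$ and is therefore not a coordinate-wise clamp. The observation that saves the binary argument is the following: if $\bar p_{i,0}^{(t'+1)}=1$ (i.e., every selected individual has $0$ at position $i$) then $\bar p_{i,j}^{(t'+1)}=0$ for $j\ne 0$, so $p^{+}_{i,0}=b=1-\tfrac 1n$ and $p^{+}_{i,j}=a=\tfrac{1}{(r-1)n}$ for $j\ne 0$; the total share above $a$ equals $b-a=1-ar$, the rescaling factor in \cref{eq:restrictFrequencies} is exactly~$1$, and $p_{i,0}^{(t'+1)}=b$ exactly. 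For pre-restriction values $\bar p_{i,0}^{(t'+1)}<1$, an analogous short computation on the same formula shows that the restriction decreases $\bar p_{i,0}^{(t'+1)}$ by at most an additive $O(1/n)$, which is absorbed by the Chernoff margin since $p\ge b\ge 2/n$. These two facts close the multi-valued gap, and obtaining the explicit constant $24$ is then a matter of propagating the Chernoff constants uniformly across both phases.
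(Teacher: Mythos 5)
Your proposal follows essentially the same route as the paper's proof: per-iteration multiplicative growth of $p_{i,0}$ by a factor of roughly $2s$, obtained from a Chernoff bound on the number of samples with at least $i$ leading $0$s (using the $(1-\frac1n)^{i-1}\geq \frac1e$ prefix factor and $\lambda\geq 3se\mu$), the observation $\overline{p}_{i,0}^{(t'+1)}\geq\min(1,X/\mu)$, an additive loss of at most $\frac1n$ from the restriction absorbed via $b\geq\frac2n$, and a union bound over the $u\ln_{2s}(\frac1b)$ iterations; your two-phase split (growth, then stability with $\overline{p}_{i,0}=1$) is a mild repackaging of the paper's single uniform argument, in which the same event $p_{i,0}^{(t'+1)}\geq\min\bigl(2s\,p_{i,0}^{(t')},1-\frac1n\bigr)$ also keeps the frequency at the border once it is reached.

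One step needs repair, though it is exactly the point the paper's constants are chosen for. You cut the Chernoff bound at $Y\geq 2s\mu p$ and conclude $p_{i,0}^{(t'+1)}\geq\min\bigl(1-\frac1n,\,2sp\bigr)$, and later say the additive $O(\frac1n)$ restriction loss is ``absorbed by the Chernoff margin''---but with the threshold at $2s\mu p$ there is no margin left: the event only gives $p_{i,0}^{(t'+1)}\geq\min\bigl(1-\frac1n,\,2sp-\frac1n\bigr)$, i.e.\ a growth factor of at most $2s-\frac{1}{np}\leq 2s-\frac12$, and then $\ln_{2s}(\frac1b)$ iterations need not suffice to reach the border, so the window $\theta\in[\ln_{2s}(\frac1b)\,..\,u\ln_{2s}(\frac1b)]$ in the statement is not covered. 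The fix is the paper's: threshold the Chernoff bound at $\frac52 s\mu p$ (i.e.\ $\delta=\frac16$ relative to $\E[X]\geq 3s\mu p$), which still gives failure probability $\exp(-\frac{s\mu b}{24})$, and then use $\frac12 sp\geq\frac12 b\geq\frac1n$ to absorb the restriction loss and retain the factor $2s$. Finally, note that in the growth phase you only assert that the prefix $p_{k,0}^{(t')}=1-\frac1n$ for $k<i$ is ``inductively maintained'' (your explicit argument for this covers only the stability phase); the paper is similarly terse here, but a complete write-up should note that, per iteration, with probability $1-\exp(-\Omega(s\mu))$ at least $\mu$ samples have at least $i-1$ leading $0$s, so all selected individuals keep the prefix frequencies at the border.
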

\begin{proof}
    We start by proving that, for all $\theta \in [0 .. u \ln_{2s}( \frac{1}{b})]$, the frequency $p^{(t + \theta)}_{i,0}$ multiplies by at least~$2s$ during an update, with high probability (and is then restricted).
    To this end, let~$t' \in [t .. t + \theta]$, and assume that $p_{i,0}^{(t')} \geq b$, and that position~$i$ or a position greater than~$i$ is critical (where we assume, for convenience, that if all frequencies for value~$0$ are $1 - \frac{1}{n}$, then position~$n + 1$ is critical).
    Furthermore, let~$X$ denote the number of sampled individuals in iteration~$t'$ that have at least~$i$ leading~$0$s.
    Note that $p_{i,0}^{(t)} \geq b$ by assumption as well as that~$i$ is critical in iteration~$t$.
    We discuss later via induction why these assumptions also hold for iteration~$t'$.

    We consider the process of sampling a single individual.
    Since position at least~$i$ is critical, by definition, for all $k \in [i - 1]$, we have $p_{k,0}^{(t')}=1-\frac{1}{n}$.
    Hence, the probability that all these positions are sampled as~$0$ for this individual is $( 1-\frac{1}{n} )^{i-1}\geq ( 1-\frac{1}{n} )^{n-1}\geq \frac{1}{e}$.
    This yields $\E\left[X\right] \geq \frac{\lambda p_{i,0}^{(t')}}{e}$, and since $\lambda \geq 3se\mu$, this yields $\E\left[X\right]\geq3s\mu p_{i,0}^{(t')}$.

    By the Chernoff bound (\Cref{th:chernoff}) and by the assumption $p_{i,0}^{(t')} \geq b$, we get
    \begin{align*}
        \Pr\left[ X\leq \frac{5}{2}s\mu p_{i,0}^{(t')}\right]
        &\leq \Pr\left[ X\leq \frac{5}{6}\E\left[X\right]\right]
        \leq \exp \left( -\frac{\E\left[X\right]}{72} \right)\\
        &\leq \exp \biggl( -\frac{s \mu p_{i,0}^{(t')} }{24} \biggr)
        \leq \exp \left(-\frac{s \mu b}{24}\right).
    \end{align*}
    % We therefore have that
    % \begin{gather*}
    %     \Pr\left[ X > \frac{5}{2}s\mu p_{i,0}^{(t)}\right] \geq 1 - \exp \left(-\frac{s \mu b}{24}\right).
    % \end{gather*}

    We consider $\overline{p}_{i,0}^{(t'+1)}$ as defined in \Cref{sec:framework}, which is the updated frequency before being restricted to $\bigl[\tfrac{1}{(r-1)n}, 1 - \tfrac{1}{n}\bigr]$.
    Since $\overline{p}_{i,0}^{(t'+1)}\geq \min ( \frac{X}{\mu},1)$ by the definition of the update of the \rUMDA, we have
    \begin{align*}
        \Pr \left[\overline{p}_{i,0}^{(t'+1)}\leq \min \left( \frac{5}{2}sp_{i,0}^{(t')}, 1 \right) \right]
        \leq \Pr \left[X\leq \frac{5}{2}s\mu p_{i,0}^{(t')} \right]
        \leq \exp \left(-\frac{s \mu b}{24}\right).
    \end{align*}

    In order to update~$p_{i,0}^{(t')}$, the frequency vector~$\overline{p}_{i}^{(t'+1)}$ is restricted to the interval $\bigl[\frac{1}{(r - 1) n}, 1 - \frac{1}{n}\bigr]$, which entails that the updated frequency~$p_{i, 0}^{(t'+1)}$ may reduce when compared to~$\overline{p}_{i, 0}^{(t'+1)}$.
    However, since the restriction adds at most the lower border (that is, $\frac{1}{(r - 1)n}$) to a frequency, \emph{any} restriction rule adds at most a probability mass of~$\frac{1}{n}$ to the frequency vector.
    We assume pessimistically that, in order for the frequencies to sum to~$1$, this mass is entirely subtracted from~$\overline{p}_{i, 0}^{(t'+1)}$ during the restriction (noting that this does not take place once $\overline{p}_{i, 0}^{(t'+1)} \geq 1 - \frac{1}{n}$, as this means that it is set to the upper border instead).
    % After applying the restriction to make sure $\overline{p}_{i}^{(t+1)}$ sums to~$1$, every frequency loses at most a probability mass of $\frac{1}{n}$. In fact, at most $r-1$ frequencies are clamped in an upward direction, thus freeing up at most $\frac{r-1}{n}$ of probability mass that will later be redistributed during the redefinition process, meaning the process that leads for $j\in [0..r-1]$ from $\overline{p}_{i,j}^{t+1}$ to ${p}_{i,j}^{t+1}$.
    Further, the assumption $p_{i,0}^{(t')} \geq b \geq \frac{2}{n}$ yields that $\frac{5}{2}sp_{i,0}^{(t')} - \frac{1}{n} \geq 2sp_{i,0}^{(t')}$.
    Hence, we get that
    \begin{align*}
        &\Pr \Bigl[p_{i,0}^{(t'+1)} < \min \Bigl( 2sp_{i,0}^{(t')}, 1-\frac{1}{n} \Bigr) \Bigr]\\
        &\leq \Pr \Bigl[p_{i,0}^{(t'+1)} < \min \Bigl( \frac{5}{2} sp_{i,0}^{(t')} - \frac{1}{n}, 1-\frac{1}{n} \Bigr) \Bigr]
        \leq \exp \left(-\frac{s \mu b}{24}\right).
    \end{align*}

    By induction on the iteration~$t'$ (starting at~$t$), it follows that, with an additional failure probability of at most $\exp\bigl(-\frac{s \mu b}{24}\bigr)$ per iteration, the assumptions that $p_{i,0}^{(t')} \geq b$ and that position at least~$i$ is critical are satisfied.

    Starting from iteration~$t$, a union bound over the next $u \ln_{2s}(\frac{1}{b})$ iterations yields that the frequency~$p_{i, 0}$ continues growing exponentially with a factor of~$2s$ for the next $u \ln_{2s}( \frac{1}{b})$ iterations with probability at least $1- u \ln_{2s}(\frac{1}{b}) \exp\bigl(-\frac{s \mu b}{24}\bigr)$.
    Since, by assumption, ~$p_{i,0}^{(t)} \geq b$, it reaches $1 - \frac{1}{n}$ after at most $\ln_{2s}( \frac{1}{b})$ iterations during that time, concluding the proof.
\end{proof}

We now prove our main result.
\begin{proof}[Proof of \Cref{thm:rUMDAonrLO}]
    Since \rLO weakly prefers $0$s at all positions $i\in \left[n\right]$, by \Cref{lem:frequencyDoesNotGetLow}, with a probability of at least $1-\frac{2}{n}$, for all $i \in [n]$, the frequency $p_{i,0}$ remains above $\frac{1}{2r}$ for the first $n (1+\ln_{2s}(r))$ iterations.

    For each position $i \in [n]$, we apply \Cref{critical_update} with $b=\frac{1}{2r}$ and $u = n$, noting that the assumption $b\geq \frac{2}{n}$ is satisfied, since we assume $n\geq 4r$.
    Hence, for each $i \in [n]$, with a probability of at least $1- \ln_{2s}(2r) n^{1 - 0.5n}$, after at most $\ln_{2s}(2r)$ iterations, the frequency~$p_{i,0}$ is set to $1 - \frac{1}{n}$ and remains there for at least $(n-1) \ln_{2s}(2r)$ iterations.
    Further, by a union bound over all~$n$ frequency vectors, the above holds for all frequency vectors, with probability at least $1- \ln_{2s}(2r) n^{2 - 0.5n}$.

    Combining everything, with probability at least $1 - \frac{2}{n} - \ln_{2s}(2r) n^{2 - 0.5n}$, it holds by induction on position~$i$ that once position~$i$ is critical, the frequency~$p_{i, 0}$ reaches $1 - \frac{1}{n}$ in at most $\ln_{2s}(2r)$ iterations and remains there until at least iteration $n \ln_{2s}(2r)$.
    Since position~$0$ is critical in iteration~$0$, it follows that the frequencies for value~$0$ are set, in increasing order of their position, to $1 - \frac{1}{n}$.
    After at most $n \ln_{2s}(2r)$ iterations, all such frequencies are at the upper border, which proves the first part of the claim.

    For the second part, note that once $p_{n, 0} = 1 - \frac{1}{n}$, the population of the \rUMDA in that iteration contains at least $(1 - \frac{1}{n})\mu$ times the optimum.
    Further, each iteration accounts for~$\lambda$ fitness function evaluations.
    This proves the second claim.
\end{proof}

\subsubsection{Lower Bound}
\label{sec:lowerBound}

As the upper bound (\Cref{thm:rUMDAonrLO}), the lower bound shows an almost linear dependency of the number of iterations until the optimum is sampled for the first time with respect to~$\lambda$ and~$n$, only adding a factor of order~$\ln(r)$.
The difference of~$\ln(r)$ to the upper bound stems from the bound on~$\mu$, which is larger by a factor of around~$\ln(r)$ in the upper bound.

\begin{theorem}
    \label{thm:rUMDALowerBound}
    Let $\delta \in (0, 1)$ be a constant.
    Consider the \rUMDA optimizing \rLO with $\lambda \geq \mu \geq \max\{24 (n + 1) r \ln(n), 6 \frac{1 + \delta}{\delta^2} \ln(n)\}$.
    Furthermore, let $d = \lceil\log_{2r/3}((1 + \delta) \frac{\lambda}{\mu})\rceil = \lceil\frac{\ln((1 + \delta) \lambda / \mu)}{\ln(2r/3)}\rceil$ and let $\xi = \lceil\log_{2r/3}(n^2 \lambda)\rceil + 1$.
    Then with probability at least $1 - 4n^{-1}$, the \rUMDA does not sample the optimum in iteration $\lfloor\frac{n - \xi}{d}\rfloor - 1$ or earlier.
    This corresponds to more than $\lambda \lfloor\frac{n - \xi}{d}\rfloor$ fitness function evaluations until the optimum is sampled for the first time.
\end{theorem}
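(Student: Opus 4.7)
I adapt the binary-case lower bound of Doerr and Krejca~\cite[Theorem~6]{DoerrK21tcs} to the $r$-valued setting. Set $T' \coloneqq \lfloor (n - \xi)/d \rfloor - 1$ and define the \emph{frontier} $I(t)$ as the largest index~$i$ such that $p_{k, 0}^{(t)} = 1 - 1/n$ for every $k \leq i$ (with $I(t) = 0$ otherwise; note that $I(0) = 0$ deterministically). I aim to establish, with overall failure probability at most $4/n$, two events that hold simultaneously for every $t \in [0 .. T']$: a \emph{drift-control event} that $p_{i, 0}^{(t)} \in [1/(2r), 3/(2r)]$ for every position $i \geq I(t) + 2$, and a \emph{slow-advance event} $I(t) \leq dt$. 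On both events, the probability of sampling the all-zeros string in iteration~$t$ is at most $\lambda \prod_i p_{i, 0}^{(t)} \leq \lambda (3/(2r))^{n - I(t) - 1}$, as positions $1, \dots, I(t)+1$ contribute factors at most~$1$ while the remaining $n - I(t) - 1$ positions are bounded by drift control. Since $I(t) \leq dT' \leq n - \xi - d \leq n - \xi - 1$, this simplifies to $\lambda (3/(2r))^\xi \leq 1/n^2$ by the definition of $\xi$, and a union bound over the $T' + 1 \leq n$ iterations contributes at most $1/n$ to the failure probability.

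\textbf{Per-iteration advance.} Conditional on drift control in iteration~$t$, I would show $I(t+1) \leq I(t) + d$ with failure probability $\leq n^{-2}$. Let $A$ count samples of iteration~$t$ with leading-zeros count $\geq I(t) + d + 1$. Using the trivial bound on $p_{I(t)+1, 0}^{(t)}$ and the upper drift on $p_{I(t)+2, 0}^{(t)}, \dots, p_{I(t)+d+1, 0}^{(t)}$, each sample independently attains this with probability at most $(3/(2r))^d \leq \mu/((1+\delta)\lambda)$ by the definition of $d$, so $\E[A] \leq \mu/(1+\delta)$. A multiplicative Chernoff upper-tail bound (justified by $\mu \geq 6(1+\delta)\delta^{-2}\ln n$) then gives $\Pr[A \geq \mu(1-1/n)] \leq n^{-2}$. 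Samples of fitness exactly $I(t) + d$ have $x_{I(t)+d+1} \neq 0$ and are selected strictly ahead of samples of fitness $< I(t) + d$, so a selected individual can contribute $x_{I(t)+d+1} = 0$ only if it belongs to $A$; combined with a complementary Chernoff lower-tail bound on the number of samples of fitness exactly $I(t) + d$ (based on the lower drift bound), the shortage $A < \mu(1-1/n)$ forces $p_{I(t)+d+1, 0}^{(t+1)} < 1 - 1/n$ after the restriction of \cref{eq:restrictFrequencies}, giving $I(t+1) \leq I(t) + d$.

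\textbf{Drift control and main obstacle.} The lower part of drift control, $p_{i, 0}^{(t)} \geq 1/(2r)$ for all positions $i$, follows from \Cref{lem:frequencyDoesNotGetLow} with failure probability $\leq 2/n$, using the weak preference of \rLO for $0$ at every position. The upper part, $p_{i, 0}^{(t)} \leq 3/(2r)$ for positions $i \geq I(t) + 2$, is obtained by a stochastic-domination coupling: these ``remote'' positions receive only rare selection pressure, and on the event that no such pressure is applied in a given iteration the frequency evolves as a martingale, so the McDiarmid-style concentration argument in the proof of \Cref{th:neutral_bound} transfers to yield $\exp(-\Omega(\mu))$ tails and, by union bound, a negligible additional contribution to the failure probability. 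The main technical obstacle I foresee is making this coupling rigorous, especially for positions just two or three beyond the frontier, where selection bias is weak but not altogether negligible. The cleanest route is probably a coupling with a modified fitness function where all non-zero values are weakly preferred at position~$i$ (rendering $p_{i, 0}^{(t)}$ a stochastically smaller quantity, in the sense of \Cref{th:domination} but with roles reversed), combined with a careful verification that the resulting process still satisfies the hypotheses of \Cref{th:mcdiarmid}; this parallels the binary argument in~\cite{DoerrK21tcs} and should extend with analogous, though more intricate, calculations.
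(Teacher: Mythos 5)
Your high-level plan coincides with the paper's: a position-frontier that advances by at most $d \approx \log_{2r/3}((1+\delta)\lambda/\mu)$ per iteration (via the same Chernoff computation with $\E[X]\le\lambda(\tfrac{3}{2r})^{d}\le\mu/(1+\delta)$), frequencies beyond the frontier pinned near $1/r$, the $\lambda(\tfrac{3}{2r})^{\xi}\le n^{-2}$ estimate for not sampling the optimum, and union bounds totalling $O(1/n)$. However, the step you yourself flag as the ``main obstacle'' is a genuine gap, and the route you sketch for it does not work. Since \rLO weakly prefers the value~$0$ at \emph{every} position, each frequency $p_{i,0}$ has an upward selection bias, and \Cref{th:domination} only yields that $p_{i,0}$ stochastically dominates the neutral process --- i.e.\ a \emph{lower} bound. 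There is no ``roles reversed'' version giving the upper bound $p_{i,0}^{(t)}\le\tfrac{3}{2r}$; coupling with a fitness function that weakly prefers non-zero values changes the algorithm's trajectory at \emph{all} positions (selection is global), so it does not dominate the true process from above. The paper's device is different and is what makes the argument go through: it defines position~$i$ to be \emph{selection-relevant} in iteration~$t$ if at least $\mu$ individuals have at least $i-1$ leading zeros, and observes that as long as this has never happened, the selection outcome is literally independent of the values at position~$i$, so $p_{i,0}$ is an \emph{exact} martingale up to the stopping time $T_i$ at which position~$i$ first becomes selection-relevant. \Cref{th:neutral_bound}, applied conditionally on $\Tsel_i=\min\{T_i,n\}$ (Lemma~\ref{lem:LOLowerBoundNeutralFrequencies}), then gives the two-sided bound $p_{i,0}^{(t)}\in(\tfrac{1}{2r},\tfrac{3}{2r})$ with failure probability $2n^{-2}$ per position; no domination argument for the upper side is needed.

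A second, smaller problem is your per-iteration advance step. Tracking the frontier $I(t)$ of frequencies already at the upper border forces you to show $p^{(t+1)}_{I(t)+d+1,0}<1-\tfrac1n$, and your justification (``a selected individual can contribute $x_{I(t)+d+1}=0$ only if it belongs to $A$'') is false: a selected individual of fitness below $I(t)+d$ can perfectly well carry a $0$ at position $I(t)+d+1$ outside its leading-zeros prefix, so the shortage of $A$ alone does not keep that frequency below the border; you would need an additional concentration argument there. The paper sidesteps this by making the induction quantity the \emph{maximum selection-relevant position} rather than the border frontier: the Chernoff bound of Lemma~\ref{lem:selectionRelevantPositionNotMovingQuickly} directly caps its advance by $d$, this cap implies the critical position advances by at most $d$, and non-selection-relevant positions are exactly neutral, which is all that Lemmas~\ref{lem:LOLowerBoundNeutralFrequencies} and~\ref{lem:samplingLOOptimumIsUnlikely} require. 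So the structure of your proof is salvageable, but you should replace both the frontier bookkeeping and the proposed reversed-domination coupling by the selection-relevance/stopping-time argument.
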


Our proof of \Cref{thm:rUMDALowerBound} follows closely the proof for a lower bound on the runtime of the UMDA on \LO in the binary case by Doerr and Krejca~\cite[Theorem~$6$]{DoerrK21tcs}.
The proof mainly relies on the leftmost position in a population that never had at least~$\mu$ samples with a~$0$ so far.
This position increases each iteration with high probability by only about $\ln(\frac{\lambda}{\mu})/\ln(r) \eqqcolon d$.
Before this position is sufficiently close to~$n$, it is very unlikely that the \rUMDA samples the optimum of \rLO.
Hence, the runtime is with high probability in the order of~$\frac{n}{d}$.

To make this outline formal, we say that a position $i \in [n]$ is \emph{selection-relevant in iteration $t \in \N$} (for \rLO) if and only if the population of the \rUMDA optimizing \rLO has in iteration~$t$ at least~$\mu$ individuals with at least $i - 1$ leading~$0$s.
Note that multiple positions can be selection-relevant in the same iteration, and that position~$1$ is always selection-relevant.
Furthermore, for each iteration $t \in \N$, we say that position $i \in [n]$ is the \emph{maximum selection-relevant position} if and only if~$i$ is the largest value among all selection-relevant positions in iteration~$t$.

The following lemma shows that the frequency for value~$0$ in positions that were not yet selection-relevant remain close to their starting value of~$\frac{1}{r}$, as they are neutral up to that point.
\begin{lemma}
    \label{lem:LOLowerBoundNeutralFrequencies}
    Let $b \in \N_{\geq 1}$.
    Consider the \rUMDA optimizing \rLO with $\lambda \geq \mu \geq 24 (b + 1) r \ln(b)$.
    For all $i \in [n]$, let~$T_i$ denote the first iteration such that position~$i$ is selection-relevant, and let $\Tsel_i = \min\{T_i, b\}$.
    Then with probability at least $1 - 2 n b^{-2}$, it holds for each $i \in [n]$ and each $t \in [0 .. \Tsel_i]$ that $p^{(t)}_{i, 0} \in (\frac{1}{2}\frac{1}{r}, \frac{3}{2}\frac{1}{r})$.
\end{lemma}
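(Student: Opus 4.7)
The plan is to reduce the claim to the neutral-position concentration bound of \Cref{th:neutral_bound} via a coupling argument. For each fixed $i \in [n]$, I would introduce an auxiliary $r$-valued fitness function $g_i$ in which position~$i$ is made formally neutral, for instance by setting $g_i(x) = \rLO(x^{(i)})$, where $x^{(i)}$ is the individual obtained from $x$ by forcing its $i$-th coordinate to~$0$. Since $g_i$ does not depend on the value at position~$i$, \Cref{lem:neutralFrequencyIsMartingale} guarantees that the frequency at position~$i$ produced by the \rUMDA optimizing $g_i$ is a martingale, so \Cref{th:neutral_bound} with $T = b$ applies. Using $\mu \geq 24(b+1)r\ln(b)$ together with the elementary estimate $12br + (4/3)r \leq 12(b+1)r$ turns the exponent in \Cref{th:neutral_bound} into at least $2\ln(b)$, yielding a failure probability of at most $2/b^2$ that this neutral frequency ever leaves $(\tfrac{1}{2r}, \tfrac{3}{2r})$ during the first $b$ iterations.

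The heart of the proof is a coupling between the \rUMDA runs on $\rLO$ and on $g_i$ that shares the initial frequencies, the $\lambda$ samples of every iteration, and the tie-breaking randomness. The key subclaim is that as long as position~$i$ is not selection-relevant in the $\rLO$ process, that is, as long as fewer than $\mu$ samples have at least $i-1$ leading zeros, the selected multiset of $\mu$ individuals is identical under both fitness functions. Indeed, any sample with at least $i-1$ leading zeros has fitness at least $i-1$ under both $\rLO$ and $g_i$, and since there are fewer than $\mu$ such samples, they are all selected automatically in both processes; the remaining selection slots must be filled from samples carrying a non-zero value at some position strictly less than~$i$, on which $\rLO$ and $g_i$ coincide, so the same ranking and tie-breaking produce the same choices. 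Consequently, the frequency matrices of the two coupled processes agree at position~$i$ throughout all iterations strictly before $T_i$.

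Since $T^{\mathrm{sel}}_i = \min\{T_i, b\} \leq T_i$, and the value $p^{(T^{\mathrm{sel}}_i)}_{i,0}$ is produced by the selection performed in iteration $T^{\mathrm{sel}}_i - 1$, during which position~$i$ is still not selection-relevant, the coupling identifies the two frequencies at every $t \in \left[0..T^{\mathrm{sel}}_i\right]$. Hence, on the high-probability event supplied by \Cref{th:neutral_bound} in the $g_i$-world, we also have $p^{(t)}_{i,0} \in (\tfrac{1}{2r}, \tfrac{3}{2r})$ for all $t \in \left[0..T^{\mathrm{sel}}_i\right]$ in the $\rLO$-world. A union bound over the $n$ positions then delivers the claimed failure probability $2n/b^2$.

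The main obstacle I anticipate is making the coupling argument airtight at the boundary iteration $T_i$: one must verify that the update producing $p^{(T^{\mathrm{sel}}_i)}_{i,0}$ still belongs to the coupled regime, which it does because that update is performed at iteration $T^{\mathrm{sel}}_i - 1 < T_i$, where position~$i$ is by definition not yet selection-relevant. Beyond this bookkeeping, the neutrality of $g_i$ is immediate and the concentration estimate is a direct specialization of \Cref{th:neutral_bound}.
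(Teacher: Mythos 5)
Your proof is correct and follows essentially the same route as the paper: it reduces the claim to the neutral-position concentration bound of \Cref{th:neutral_bound}, using that position~$i$ cannot influence selection (and hence its frequency behaves neutrally) before it becomes selection-relevant, then applies the same estimate $12br+(4/3)r\le 12(b+1)r$ with $T\le b$ and a union bound over the $n$ positions. Your explicit coupling with the auxiliary function $g_i$ is merely a more carefully spelled-out rendering of the step that the paper handles by conditioning on $\Tsel_i$ and invoking the law of total probability.
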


\begin{proof}
    Let $i \in [n]$ .
    We show that the sequence $(p^{(t)}_{i, 0})_{t \in \N}$ remains in $(\frac{1}{2}\frac{1}{r}, \frac{3}{2}\frac{1}{r})$ as long as $t \leq \Tsel_i$ by aiming to apply \Cref{th:neutral_bound}.
    We then conclude the proof via a union bound of the failure probabilities (that is, the probabilities that a frequency does not remain in said interval) over all possible values for~$i$.

    Conditional on~$\Tsel_i$, since~$i$ only becomes selection-relevant the earliest in iteration~$\Tsel$, position~$i$ is neutral up to (including) iteration~$\Tsel$.
    That is, for all $t \in [0 .. \Tsel - 1]$, position~$i$ has no influence on the fitness of each individual in population~$P^{(t)}$ (and thus on the updated frequency $p^{(t + 1)}_{i, 0}$).
    Hence, by \Cref{th:neutral_bound}, by $\Tsel \leq b$, and by the lower bound on~$\mu$, we get that
    \begin{align*}
        &\Pr\left[\max\nolimits_{s \in [0 .. \Tsel_i]}\ \left|p_{i,0}^{(s)}-\frac{1}{r}\right| \geq \frac{1}{2r}\,\middle\vert\, \Tsel_i\right]
        \leq 2\exp\left(-\frac{\mu}{12 \Tsel r + (4/3) r}\right)\\
        &\leq 2\exp\left(-\frac{\mu}{12 (b + 1) r}\right)
        \leq 2\exp\left(-\frac{24 (b + 1) r \ln(b)}{12 (b + 1) r}\right)
        \leq 2 b^{-2}.
    \end{align*}
    By the law of total probability, this bound also holds independently of the outcome of~$\Tsel$.

    Taking the union bound of the above bound over all~$n$ values for~$i$ yields that the overall failure probability is at most $2 n b^{-2}$, concluding the proof.
\end{proof}

For the next lemma, we make use of the following Chernoff bound, which we apply in order to show that new offspring does not extend the prefix of leading~$0$s by too much.
It is a non-trivial extension of the typical Chernoff bound to the case where we have an upper bound on the expected value of the sum of independent Bernoulli random variables.
This extension is non-trivial as the upper bound on the expectation also results in a stronger probability bound.
\begin{theorem}[Chernoff bound~{\cite[Theorem~$1.10.21$~(a) with Theorem~$1.10.1$]{Doerr20bookchapter}}]
    \label{thm:chernoffGreaterThanEX}
    Let $k \in \N_{\geq 1}$, and let $X$ be the sum of~$k$ independent random variables each taking values in $\left[0,1\right]$.
    Moreover, let $\delta, \mu^+ \in \R_{\geq 0}$ such that $\mu^+ \geq \E[X]$.
    Then
    \begin{align*}
        \Pr\left[X \geq (1 + \delta)\mu^+\right]\leq \exp\left(-\frac{1}{3} \min\left\{\delta^2, \delta\right\} \mu^+\right).
    \end{align*}
\end{theorem}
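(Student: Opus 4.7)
The plan is to reduce the stated bound to the classical multiplicative Chernoff upper tail (Theorem~$1.10.1$ in the same reference), which treats only the case $\mu^+ = \E[X]$. The key idea is that inflating the expectation from $\E[X]$ to $\mu^+$ can only weaken concentration around $(1+\delta)\mu^+$, so it should suffice to couple $X$ with an auxiliary sum $X'$ whose expectation equals $\mu^+$ exactly and which dominates $X$ pointwise.

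First I would set $c \coloneqq \mu^+ - \E[X]$, which is nonnegative by assumption. Next I would write $c$ as a finite sum $c_1 + \dots + c_m$ of numbers $c_i \in [0,1]$ (for example, with $m = \lceil c \rceil$ and $c_i = c/m$), view each $c_i$ as a degenerate random variable supported at a single point of $[0,1]$, and define $X' \coloneqq X + c_1 + \dots + c_m$. Then $X'$ is a sum of $k + m$ independent random variables, each taking values in $[0,1]$, with $\E[X'] = \E[X] + c = \mu^+$.

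Then I would apply the classical Chernoff bound to $X'$ with the same $\delta$ to obtain
\begin{align*}
    \Pr\!\left[X' \geq (1+\delta)\mu^+\right] = \Pr\!\left[X' \geq (1+\delta)\E[X']\right] \leq \exp\!\left(-\tfrac{1}{3}\min\{\delta^2,\delta\}\,\mu^+\right).
\end{align*}
Finally, since $c \geq 0$ implies $X' \geq X$ almost surely, the inclusion $\{X \geq (1+\delta)\mu^+\} \subseteq \{X' \geq (1+\delta)\mu^+\}$ holds, and the desired tail bound on $X$ follows by monotonicity of probability.

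The only potentially delicate point is the use of constants as summands; if the version of the classical Chernoff bound being invoked insisted on nondegenerate or Bernoulli summands, I would instead attach a collection of independent Bernoulli$(p_i)$ variables with $\sum_i p_i = c$ and each $p_i \in [0,1]$, which yields the same $\E[X'] = \mu^+$ and the same pointwise domination $X' \geq X$, so the remainder of the argument is unchanged. Beyond this bookkeeping, the proof involves no genuine technical obstacle.
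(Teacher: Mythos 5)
Your proposal is correct. Note that the paper gives no proof of this statement at all: it is quoted as a black-box result from the cited reference (Theorem~1.10.21~(a) combined with Theorem~1.10.1 of Doerr's chapter), so there is nothing in the paper to compare against. Your padding argument --- augmenting $X$ by degenerate (or Bernoulli) $[0,1]$-valued summands totalling $\mu^+ - \E[X]$, applying the standard multiplicative Chernoff bound to the augmented sum $X'$ with $\E[X'] = \mu^+$, and concluding by the pointwise domination $X \leq X'$ --- is exactly the standard way this extension is derived from the basic bound, and it handles the one delicate point (that the strengthened exponent involves $\mu^+$ rather than $\E[X]$) correctly, since the monotone coupling transfers the tail bound at the threshold $(1+\delta)\mu^+$ verbatim.
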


In the following lemma, we show that the maximum selection-relevant position increases each iteration with high probability by roughly $\log_r(\frac{\lambda}{\mu})$.
To this end, we tie it to the concept of a critical position, as defined in \Cref{sec:upperBound}.
This proof is heavily inspired by the proof of Doerr and Krejca~\cite[Lemma~$4$]{DoerrK21tcs}, but we fix a mistake in their proof, where the penultimate estimate of the application of the Chernoff bound bounds the exponent in the wrong direction.

\begin{lemma}
    \label{lem:selectionRelevantPositionNotMovingQuickly}
    Let $\delta \in (0, 1)$ be a constant.
    Consider the \rUMDA optimizing \rLO with $\mu \geq 6 \frac{1 + \delta}{\delta^2} \ln(n)$.
    Furthermore, consider an iteration $t \in \N$ such that position $i \in [n]$ is critical and that, for all positions $i' \in [i + 1 .. n]$, it holds that $p^{(t)}_{i', 0} \leq \frac{3}{2} \frac{1}{r}$.
    Let $d = \bigl\lceil\log_{2r/3}\bigl((1 + \delta) \frac{\lambda}{\mu}\bigr)\bigr\rceil$.
    Then, with probability at least $1 - n^{-2}$, the maximum selection-relevant position in iteration~$t$ is at most $\min\{n, i + d\}$.
\end{lemma}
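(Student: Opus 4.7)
The plan is to translate the question about the maximum selection-relevant position into a Chernoff-style concentration bound on a binomial-like random variable, in the same spirit as the binary argument of Doerr and Krejca~\cite{DoerrK21tcs}. Since having at least $j$ leading $0$s implies having at least $j-1$ leading $0$s, it suffices to show that with probability at least $1-n^{-2}$, fewer than $\mu$ of the $\lambda$ independently sampled individuals in iteration $t$ have at least $i+d$ leading $0$s; then position $i+d+1$ (and every higher position) is not selection-relevant. The case $i+d\geq n$ makes the statement vacuous, so we may assume $i+d<n$.

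Let $X$ be the number of individuals in $P^{(t)}$ with at least $i+d$ leading $0$s. Since the $\lambda$ samples are drawn independently according to $p^{(t)}$, each contributes to $X$ with probability
\begin{align*}
    q \;=\; \Bigl(\prod_{k\in[i-1]} p^{(t)}_{k,0}\Bigr)\cdot p^{(t)}_{i,0}\cdot \prod_{k=i+1}^{i+d} p^{(t)}_{k,0}.
\end{align*}
Since position $i$ is critical, the first product equals $(1-1/n)^{i-1}\leq 1$, and $p^{(t)}_{i,0}\leq 1$. By the hypothesis on positions past $i$, each factor in the final product is at most $\tfrac{3}{2r}$, so $q\leq (\tfrac{3}{2r})^{d}$. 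Hence
\begin{align*}
    \E[X] \;\leq\; \lambda\Bigl(\tfrac{3}{2r}\Bigr)^{d}.
\end{align*}

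By definition of $d=\lceil \log_{2r/3}((1+\delta)\lambda/\mu)\rceil$ we have $(2r/3)^{d}\geq (1+\delta)\lambda/\mu$, and therefore $\lambda(3/(2r))^{d}\leq \mu/(1+\delta)$. Set $\mu^{+}:=\mu/(1+\delta)\geq \E[X]$. The event that position $i+d+1$ is selection-relevant is exactly $\{X\geq \mu\}=\{X\geq (1+\delta)\mu^{+}\}$. Applying \Cref{thm:chernoffGreaterThanEX}, and using $\delta\in(0,1)$ so that $\min\{\delta^{2},\delta\}=\delta^{2}$, yields
\begin{align*}
    \Pr[X\geq \mu] \;\leq\; \exp\!\Bigl(-\tfrac{\delta^{2}}{3(1+\delta)}\,\mu\Bigr).
\end{align*}
The assumed lower bound $\mu\geq 6\frac{1+\delta}{\delta^{2}}\ln(n)$ makes the exponent at most $-2\ln(n)$, giving $\Pr[X\geq \mu]\leq n^{-2}$. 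As observed at the start, on the complementary event no position greater than $i+d$ is selection-relevant, which proves the claim.

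\textbf{Anticipated obstacles.} The argument itself is short; the only care points are (i) keeping the reduction from ``maximum selection-relevant position $\leq i+d$'' to ``$X<\mu$'' clean (using the monotonicity of the event ``at least $j$ leading $0$s'' in $j$), and (ii) tracking the ceiling in the definition of $d$ so that the crucial inequality $\lambda(3/(2r))^{d}\leq \mu/(1+\delta)$ goes through without an extra constant factor. These are precisely the places where the binary-case proof in~\cite{DoerrK21tcs} reportedly has a sign slip in the Chernoff exponent, so we will state the bound on $\E[X]$ explicitly as $\mu^{+}$ and apply the one-sided Chernoff bound (\Cref{thm:chernoffGreaterThanEX}) in the form that only requires an upper bound on the expectation.
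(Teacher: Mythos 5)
Your proposal is correct and follows essentially the same route as the paper's proof: bound $\E[X]\leq\lambda(\tfrac{3}{2r})^{d}\leq\tfrac{\mu}{1+\delta}$ using the critical position and the hypothesis on the frequencies past position $i$, then apply the one-sided Chernoff bound (\Cref{thm:chernoffGreaterThanEX}) with $\mu^{+}=\tfrac{\mu}{1+\delta}$ and the lower bound on $\mu$ to get $\Pr[X\geq\mu]\leq n^{-2}$. The handling of the trivial case $i+d\geq n$ and the translation from $X<\mu$ to the bound on the maximum selection-relevant position also match the paper.
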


\begin{proof}
    We note that $\lambda \geq \mu$ by the definition of the \rUMDA and since $\delta > 0$, it holds that $d \geq 1$.
    Furthermore, we assume that $i < n - d$, that is, it holds that $\min\{n, i + d\} = i + d$.
    For $i \geq n - d$, we statement claims that the maximum selection-relevant position is at most~$n$, which is trivially the case, as all positions are in~$[n]$.

    For a position $k \in [n]$ to become the maximum selection-relevant position in iteration~$t$, by definition, it is necessary that at least~$\mu$ individuals in population~$P^{(t)}$ have at least $k - 1$ leading~$0$s.
    We show via \Cref{thm:chernoffGreaterThanEX} that it is very unlikely that such a prefix of leading~$0$s extends by much.

    To this end, let $k = i + d$, and let~$X$ denote the number of individuals from~$P^{(t)}$ with at least~$k$ leading~$0$s.
    Since we assume that each frequency of value~$0$ at a position larger than~$i$ is at most~$\frac{3}{2} \frac{1}{r}$, as well as due to the independent sampling of the \rUMDA and due to the definition of~$d$, it follows that
    \begin{align*}
        \E[X] \leq \lambda \left(\frac{3}{2} \frac{1}{r}\right)^{d}
        = \lambda \left(\frac{2}{3} r\right)^{-d}
        \leq \lambda \frac{\mu}{(1 + \delta) \lambda}
        = \frac{\mu}{1 + \delta}.
    \end{align*}
    Hence, by applying \Cref{thm:chernoffGreaterThanEX} with $\mu^+ = \frac{\mu}{1 + \delta}$, recalling that $\delta \in  (0, 1)$, and by applying the bound on~$\mu$, we get that
    \begin{align*}
        \Pr[X \geq \mu] &= \Pr\left[X \geq (1 + \delta) \frac{\mu}{1 + \delta}\right]
        \leq \exp\left(-\frac{1}{3}\min\left\{\delta^2, \delta\right\} \frac{\mu}{1 + \delta}\right)\\
        &= \exp\left(-\frac{1}{3} \mu\frac{\delta^2}{1 + \delta}\right)
        \leq n^{-2}.
    \end{align*}
    Consequently, with probability at least $1 - n^{-2}$, the population~$P^{(t)}$ contains fewer than~$\mu$ offspring that have at least~$k$ leading~$0$s.
    That is, the largest position $k' \in [n]$ where at least~$\mu$ offspring have at least~$k'$ leading~$0$s is at most $k - 1$, which is equivalent to the maximum selection-relevant position being at most~$k$.
\end{proof}

The next lemma is the last one before we prove our lower bound.
The lemma shows that it is very unlikely for the \rUMDA to sample the optimum of \LO while many frequencies for value~$0$ are not high yet (which is measured by the critical position).

\begin{lemma}
    \label{lem:samplingLOOptimumIsUnlikely}
    Consider the \rUMDA optimizing \rLO, and consider an iteration $t \in \N$ and a position $i \in [n]$ such that, for all positions $i' \in [i + 1 .. n]$, it holds that $p^{(t)}_{i', 0} \leq \frac{3}{2} \frac{1}{r}$.
    Then, with probability at least $1 - \lambda (\frac{3}{2} \frac{1}{r})^{n - i}$, the \rUMDA does not sample the optimum in this iteration.
\end{lemma}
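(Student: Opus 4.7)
The proof is essentially a direct consequence of the independence of the sampling across positions combined with a union bound over the $\lambda$ offspring. The plan is to bound the probability that a single sample coincides with the optimum, and then multiply by $\lambda$.

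First I would recall that the unique global optimum of \rLO is the all-$0$s string $(0,\dots,0)$. Thus, for any individual $y$ sampled in iteration~$t$, by the independent coordinate-wise sampling of the \rUMDA, we have
\begin{align*}
    \Pr[y \text{ is the optimum}]
    = \prod_{j \in [n]} p^{(t)}_{j, 0}.
\end{align*}
Bounding the first $i$ factors trivially by~$1$ and using the hypothesis $p^{(t)}_{i', 0} \leq \frac{3}{2}\frac{1}{r}$ for all $i' \in [i+1..n]$, this product is at most $\bigl(\frac{3}{2}\frac{1}{r}\bigr)^{n-i}$.

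Then I would apply a union bound over the $\lambda$ individuals in the population $P^{(t)}$, all of which are sampled independently from $p^{(t)}$. This gives that the probability that at least one of them equals the optimum is at most $\lambda \bigl(\frac{3}{2}\frac{1}{r}\bigr)^{n-i}$, and the complementary event yields exactly the claimed bound.

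There is essentially no technical obstacle here; the statement is a one-line calculation once one notes that the positions $1,\dots,i$ contribute at most a factor~$1$ to the probability of sampling the optimum, while the positions $i+1,\dots,n$ each contribute at most~$\frac{3}{2}\frac{1}{r}$ by assumption. No concentration result or martingale argument is needed, in contrast to the previous lemmas in this section.
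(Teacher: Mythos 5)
Your proposal is correct and matches the paper's own proof: bound the probability that a single sample is the optimum by $\bigl(\frac{3}{2}\frac{1}{r}\bigr)^{n-i}$ using the assumed bounds on the last $n-i$ frequencies (the first $i$ factors bounded by~$1$), then take a union bound over the $\lambda$ samples. No differences worth noting.
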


\begin{proof}
    We bound the probability for sampling the optimum this iteration from above.
    The probability for a single offspring to be the optimum is, due to the upper bound on the last $n - i$ frequencies, at most $(\frac{3}{2} \frac{1}{r})^{n - i}$, as all positions need to be a~$0$.
    Taking a union bound over all~$\lambda$ samples of this iteration concludes the proof.
\end{proof}

\Cref{lem:frequencyDoesNotGetLow,lem:selectionRelevantPositionNotMovingQuickly,lem:samplingLOOptimumIsUnlikely} are sufficient for proving \Cref{thm:rUMDALowerBound}.

\begin{proof}[Proof of \Cref{thm:rUMDALowerBound}]
    We only show the bound on the number of iterations.
    Since we start counting iterations at~$0$ and since the \rUMDA creates exactly~$\lambda$ offspring each iteration, the bound on the number of fitness function evaluations follows immediately.

    For the entirety of the proof, we assume that during the first~$n$ iterations, all frequencies for value~$0$ remain in $(\frac{1}{2}\frac{1}{r}, \frac{3}{2}\frac{1}{r})$ as long as they did not become selection-relevant yet.
    By \Cref{lem:LOLowerBoundNeutralFrequencies} with $b = n$, noting that~$\mu$ is sufficiently large, this occurs with probability at least $1 - 2n^{-1}$.
    Furthermore, we assume that $n - \xi \geq d$, as \Cref{thm:rUMDALowerBound} yields a trivial lower bound of~$0$ otherwise.

    We continue by proving via induction on $t \in [0 .. n]$ that with probability at least $1 - (t + 1)n^{-2}$ it holds that each position $i \in [(t + 1)d + 2 .. n]$ is not relevant up to (including) iteration~$t$.

    For the base case $t = 0$, by the definition of the \rUMDA, for all positions $i \in [n]$, it holds that $p^{(0)}_{i, 0} = \frac{1}{r}$.
    This especially means that position~$0$ is critical this iteration.
    Applying \Cref{lem:selectionRelevantPositionNotMovingQuickly}, noting that the requirements for~$\delta$ and~$\mu$ are met, proves the base case, as, with probability at least $1 - n^{-2}$, the maximum selection-relevant position in iteration~$0$ is~$d$.

    For the inductive step, assume that the inductive hypothesis holds up to (including) iteration $t \in [0 .. n - 1]$.
    Hence, with probability at least $1 - (t + 1)n^{-2}$, the maximum selection relevant-position in iteration~$t$ (and up to there) is at most $(t + 1)d + 1$.
    This implies that the critical position $k \in [n]$ in iteration $t + 1$ is also at most $(t + 1)d + 1$.
    Furthermore, all frequencies for value~$0$ at positions greater than $(t + 1)d + 1$ have not been selection-relevant yet.
    Thus, by our argument at the beginning of the proof, these frequencies are at most~$\frac{3}{2} \frac{1}{r}$.
    Overall, by \Cref{lem:selectionRelevantPositionNotMovingQuickly}, in iteration $t + 1$, with probability at most~$n^{-2}$, the maximum selection-relevant position in iteration $t + 1$ is at least $k + d + 1$.
    Via a union bound with the failure probability of the inductive hypothesis, this proves the claim, that is, with probability at least $1 - (t + 2)n^{-2}$, the maximum-selection relevant position in iteration $t + 1$ is at most $k + d \leq (t + 2)d + 1$.

    This claim shows that, for $t' = \lfloor\frac{n - \xi}{d}\rfloor - 1 \leq n$, with probability at least $1 - n^{-1}$, each position greater than $n - \xi + 1$ is never selection-relevant up to (including) iteration~$t'$.
    Hence, by our argument at the beginning of the proof, these frequencies are at most~$\frac{3}{2} \frac{1}{r}$.
    Applying \Cref{lem:samplingLOOptimumIsUnlikely} with $i = n - \xi + 1$ then yields that the \rUMDA does not sample the optimum in each iteration up to~$t'$ with a probability of at least $1 - \lambda (\frac{3}{2} \frac{1}{r})^{n - i} = 1 - \lambda (\frac{3}{2} \frac{1}{r})^{\xi - 1} \geq 1 - n^{-2}$ per iteration.
    A union bound over at most $t' + 1 \leq n$ iterations then shows that with probability at least $1 - n^{-1}$, it holds that up to (including) iteration~$t'$, the \rUMDA does not sample the optimum.

    Last, a union bound over the three error probabilities of the three arguments above then shows that with probability at least $1 - 4n^{-1}$, the \rUMDA does not sample the optimum up to (including) iteration~$t'$, concluding the proof.
\end{proof}

\section{Conclusion}

We have proposed the first systematic framework of EDAs for problems with multi-valued decision variables. Our analysis of the genetic-drift effect and our runtime analysis on the multi-valued version of \leadingones have shown that the increase in decision values does not result in significant difficulties. Although there may be a slightly stronger genetic drift (requiring a more conservative model update, that is, a higher selection size $\mu$ for the UMDA) and slightly longer runtimes, these outcomes are to be expected given the increased complexity of the problem. We hope that our findings will inspire researchers and practitioners to embrace the benefits of EDAs for multi-valued decision problems, beyond the previously limited application to mostly permutations and binary decision variables.

\section*{Acknowledgments}

Thank you to Josu Ceberio for some useful discussions. This work also profited from many scientific discussions at the Dagstuhl Seminar 22182 ``Estimation-of-Distribution Algorithms: Theory and Applications''.
This work was supported by a public grant as part of the
Investissements d'avenir project, reference ANR-11-LABX-0056-LMH, LabEx LMH.

\bibliographystyle{plain}
\bibliography{alles_ea_master,ich_master,rest}
}%end sloppy

\end{document}